\documentclass[11pt]{article}
\usepackage[letterpaper]{geometry}
\usepackage[parfill]{parskip}
\usepackage{amsmath,amsthm,amssymb,bbm}
\usepackage{mathtools}
\usepackage{cases}

\usepackage{microtype}

\usepackage{subfigure}
\usepackage{algorithm,algorithmic}

\usepackage{color}
\usepackage{appendix}
\usepackage{xspace}
\usepackage{enumitem}

\usepackage[english]{babel}

\usepackage{authblk}

\usepackage{url}
\usepackage[authoryear]{natbib}
\usepackage[colorlinks,citecolor=blue,urlcolor=blue,linkcolor=blue,linktocpage=true]{hyperref}
\pdfstringdefDisableCommands{\def\Cref#1{#1}}

\usepackage{cleveref}
\crefformat{equation}{(#2#1#3)}
\crefrangeformat{equation}{(#3#1#4) to~(#5#2#6)}
\crefname{equation}{}{}
\Crefname{equation}{}{}



\crefname{definition}{\textbf{definition}}{definitions}
\Crefname{definition}{Definition}{Definitions}
\crefname{assumption}{\textbf{assumption}}{assumptions}
\Crefname{assumption}{Assumption}{Assumptions}


\definecolor{maroon}{RGB}{192,80,77}

\newtheorem{theorem}{Theorem}
\newtheorem{lemma}[theorem]{Lemma}
\newtheorem{proposition}[theorem]{Proposition}

\newtheorem{definition}[theorem]{Definition}

\newtheorem{remark}[theorem]{Remark}


\newcommand{\OBP}{\textsc{ObjPert} }
\newcommand{\OPS}{\textsc{OPS} }

\newcommand{\vct}{\boldsymbol }

\newcommand{\R}{\mathbb{R}}

\def\E{\mathbb{E}}
\def\P{\mathbb{P}}
\def\Cov{\mathrm{Cov}}

\def\R{\mathbb{R}}
\def\I{\mathbb{I}}
\def\cA{\mathcal{A}}
\def\cB{\mathcal{B}}

\def\cN{\mathcal{N}}
\def\cP{\mathcal{P}}

\def\cX{\mathcal{X}}

\begin{document}

\title{Privacy for Free: Posterior Sampling and Stochastic Gradient Monte Carlo}

\author[1]{Yu-Xiang Wang}
\author[2,1]{Stephen E. Fienberg}
\author[1,3]{Alex Smola}
\affil[1]{Machine Learning Department, Carnegie Mellon University}
\affil[2]{Department of Statistics, Carnegie Mellon University}
\affil[3]{Marianas Labs Inc.}

\maketitle

\begin{abstract}
We consider the problem of Bayesian learning on sensitive datasets and present two simple but somewhat surprising results that connect Bayesian learning to ``differential privacy'', a cryptographic approach to protect individual-level privacy while permiting database-level utility. Specifically, we show that that under standard assumptions, getting one single sample from a posterior distribution is differentially private ``for free''. We will see that estimator is statistically consistent, near optimal and computationally tractable whenever the Bayesian model of interest is consistent, optimal and tractable.
Similarly but separately, we show that a
recent line of works that use stochastic gradient for Hybrid Monte Carlo (HMC) sampling also preserve differentially privacy with minor or no modifications of the algorithmic procedure at all, these observations lead to an ``anytime'' algorithm for Bayesian learning under privacy constraint. We demonstrate that it performs much better than the state-of-the-art differential private methods on synthetic and real datasets.
\end{abstract}

\newpage
\tableofcontents
\newpage

\section{Introduction}

Bayesian models have proven to be one of the most successful classes of tools in machine learning. It stands out as a principled yet conceptually simple pipeline for combining expert knowledge and statistical evidence, modelling with complicated dependency structures and harnessing uncertainty by making probabilistic inferences \citep{geman1984stochastic,gelman2014bayesian}. In the past few decades, the Bayesian approach has been intensively used in modelling speeches \citep{rabiner1989tutorial}, text documents \citep{blei2003latent}, images/videos \citep{fei2005bayesian}, social networks \citep{airoldi2009mixed}, brain activity \citep{penny2011statistical}, and is often considered gold standard in many of these application domains. Learning a Bayesisan model typically involves sampling from a posterior distribution, therefore the learning process is inherently randomized.

Differential privacy (DP) is a cryptography-inspired notion of privacy \citep{dwork2006differential,dwork2006calibrating}. It is designed to provide a very strong form of protection of individual user's private information and at the same time allow data analyses to be conducted with proper utility. Any algorithm that preserves differential privacy must be appropriately randomized too. For instance, one can differential-privately release the average salary of Californian males by adding a Laplace noise proportional to the sensitivity of this figure upon small perturbation of the data sample.

In this paper, we connect the two seemingly unrelated concepts by showing that under standard assumptions, the intrinsic randomization in the Bayesian learning can be exploited to obtain a degree of differential privacy. In particular, we show that:
\begin{itemize}[itemsep=0mm]
\item   Any algorithm that produces a single sample from the exact (or approximate) posterior distribution of a Bayesian model with bounded log-likelihood is $\epsilon$ (or $(\epsilon,\delta)$)-differentially private\footnote{Similar observations were made in \citet{mir13} and \citet{dimitrakakis2014robust} under slightly different regimes and assumptions, and we will review them among other related work in Section~\ref{sec:relatedwork}.}. By the classic results in asymptotic statistics \citep{le1986bernstein,vandervaart2000asymptotic}, we show that this posterior sample is a consistent estimator whenever the Bayesian model is consistent; and near optimal whenever the asymptotic normality and efficiency of the maximum likelihood estimate holds.

\item The popular large-scale sampler Stochastic Gradient Langevin Dynamics~\citep{welling2011bayesian} and extensions, e.g. \citet{ahn2012bayesian,Chen2014SGHMC,ding2014bayesian} obey $(\epsilon,\delta)$-differentially private with no algorithmic changes when the stepsize is chosen to be small. This gives us a procedure that can potentially output many (correlated) samples from an approximate posterior distribution.
\end{itemize}

These simple yet interesting findings make it possible for differential privacy to be explicitly considered when designing Bayesian models, and for Bayesian posterior sampling to be used as a valid DP mechanism. We demonstrate empirically that these methods work as well as or better than the state-of-the-art differential private empirical risk minimization (ERM) solvers using objective perturbation~\citep{chaudhuri2011differentially, kifer2012private}.

The results presented in this paper are closely related to a number of previous work, e.g., \citet{mcsherry2007mechanism,mir13,bassily2014private,dimitrakakis2014robust}. Proper comparisons with them would require the knowledge of our results, thus we will defer detailed comparisons to Section~\ref{sec:relatedwork} near the end of the paper.


\section{Notations and Preliminary}

Throughout the paper, we assume data point $\vct x\in \cX$ and $\vct \theta\in\Theta$ is the model. This can be the finite dimensional parameter of a single exponential family model or a collection of these in a graphical model, or a function in a Hilbert space or other infinite dimensional objects if the model is nonparametric. $\pi(\vct \theta)$ denotes a prior blief of the model parameters and $p(\vct x|\vct \theta)$ and $\ell(\vct x|\vct \theta)$ are the likelihood and log-likelihood of observing data point $x$ given model parameter $\vct \theta$. If we observe $X=\{\vct x_1,...,\vct x_n\}$, the posterior distribution
$$
\pi(\vct \theta|X) = \frac{\pi(\vct \theta)\prod_{i=1}^N p(\vct x_i|\vct \theta)}{\int \prod_{i=1}^N p(\vct x_i|\vct \theta) \pi(\vct \theta) d\pi}
$$
denotes the updated belief conditioned on the observed data. Learning Bayesian models correspond to finding the mean or mode of the posterior distribution, but often, the entire distribution is treated as the output, which provides much richer information than just a point estimator. In particular, we get error bars of the estimators for free (credibility intervals).


Ignoring the philosophical disputes of Bayesian methods for the moment, practical challenges of Bayesian learning are often computational. As the models get more complicated, often there is not a closed-form expression for the posterior. Instead, we often rely on Markov Chain Monte Carlo methods, e.g., Metropolis-Hastings algorithm~\citep{hastings1970monte} to generate samples. This is often prohibitively expensive when the data is large. One recent approach to scale up Bayesian learning is to combine stochastic gradient estimation as in \citet{robbins1951stochastic} and Monte Carlo methods that simulates stochastic differential equations, e.g. \citet{neal2011mcmc}. These include Stochastic Gradient Langevin dynamics (SGLD) \citep{welling2011bayesian}, Stochstic Gradient Fisher scoring (SGFS) \citep{ahn2012bayesian}, Stochastic Gradient Hamiltonian Monte Carlo (SGHMC) \citep{Chen2014SGHMC} as well as more recent Stochastic Gradient Nos\'{e}-Hoover Thermostat (SGNHT) \citep{ding2014bayesian}. We will describe them with more details and show that these series of tools provide differential privacy as a byproduct of using stochastic gradient and requiring the solution to not collapse to a point estimate.

\subsection{Differential privacy}
Now we will talk about what we need to know about differential privacy. Let the space of data be $\cX$ and data points $X,Y\in \cX^n$.
Define $d(X,Y)$ to be the edit distance or Hamming distance between data set $X$ and $Y$, for instance, if $X$ and $Y$ are the same except one data point, then $d(X,Y)=1$.
\begin{definition}(Differential Privacy)
We call a randomized algorithm $\cA$ $(\epsilon,\delta)$-differentially private with domain $\cX^n$ if for all measurable set $S\subset \text{Range}(\cA)$ and for all $X,Y \in \cX^{n}$ such that $d(X,Y)\leq 1$, we have
$$
\P(\cA(X)\in S) \leq \exp(\epsilon) \P(\cA(Y)\in S) + \delta.
$$
If $\delta=0$, then $\cA$ is the called $\epsilon$-differential private.
\end{definition}
This definition naturally prevents linkage attacks and the identification of individual data from adversaries having arbitrary side information and infinite computational power. The promise of differential privacy has been interpreted in statistical testing, Byesian inference and information theory for which we refer readers to Chapter 1 of \citep{dwork2013algorithmic}.

There are several interesting properties of differential privacy that we will exploit here. Firstly, the definition is closed under postprocessing.
\begin{lemma}[Postprocessing immunity]\label{lemma:postprocessing}
If $\cA$ is an $(\epsilon,\delta)$-DP algorithm, $\cB\circ \cA$ is also $(\epsilon,\delta)$-DP algorithm for any $\cB$.
\end{lemma}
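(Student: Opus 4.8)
The plan is to handle the postprocessing map $\cB$ in two stages: first when $\cB$ is a deterministic measurable map, and then reduce the general (randomized) case to the deterministic one by conditioning on $\cB$'s internal randomness. Throughout, fix a neighbouring pair $X,Y\in\cX^n$ with $d(X,Y)\le 1$ and an arbitrary measurable target set $T\subset\text{Range}(\cB\circ\cA)$; the goal is the inequality $\P((\cB\circ\cA)(X)\in T)\le e^{\epsilon}\,\P((\cB\circ\cA)(Y)\in T)+\delta$.

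For a deterministic $\cB$, the key observation is that the event $\{\cB(\cA(X))\in T\}$ is exactly the event $\{\cA(X)\in S\}$ for the preimage $S:=\cB^{-1}(T)=\{z\in\text{Range}(\cA):\cB(z)\in T\}$. Provided $\cB$ is measurable, $S$ is a measurable subset of $\text{Range}(\cA)$, so it is a legal test set in the definition of differential privacy for $\cA$. Applying that definition with this particular $S$ gives $\P(\cA(X)\in S)\le e^{\epsilon}\,\P(\cA(Y)\in S)+\delta$, and rewriting both sides back in terms of $\cB\circ\cA$ yields exactly the desired bound.

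For a randomized $\cB$, I would represent its internal randomness as an auxiliary variable $R$ drawn independently of the data and of $\cA$'s own randomness, so that $\cB(z)=b(z,R)$ for some jointly measurable deterministic map $b$. Conditioning on $R=r$, the map $z\mapsto b(z,r)$ is deterministic, so the previous paragraph applies and gives, for every fixed $r$, the inequality $\P(b(\cA(X),r)\in T)\le e^{\epsilon}\,\P(b(\cA(Y),r)\in T)+\delta$. Taking expectation over $R$ on both sides, using that the additive constant $\delta$ is unchanged under averaging and that $R$ is independent of $(\cA,\text{data})$, and invoking Fubini/Tonelli to identify $\E_R[\P(b(\cA(X),R)\in T)]$ with $\P((\cB\circ\cA)(X)\in T)$, the inequality is preserved verbatim. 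This completes the argument.

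The steps are individually routine; the points requiring care are measurability, since one must assume (or build into the notion of a valid postprocessing) that $\cB$, hence $z\mapsto b(z,r)$, is measurable so that the preimage $S$ and the inner probabilities are well defined, and the independence of $\cB$'s randomness from that of $\cA$, which is what lets the per-$r$ inequalities be averaged with no interaction term and keeps $(\epsilon,\delta)$ from degrading. The main (mild) obstacle is therefore the clean bookkeeping of the randomized case rather than any genuine analytic difficulty.
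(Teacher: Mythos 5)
Your proof is correct. Note, however, that the paper itself offers no proof of this lemma at all: it states postprocessing immunity as a known property of differential privacy, accompanied only by the informal remark that the property is ``natural because otherwise the whole point of differential privacy will be forfeited,'' and implicitly defers to the standard literature (e.g., Dwork and Roth's monograph, where this appears as a basic proposition). So there is no paper proof to compare against; what you have written is precisely the standard argument that the paper is leaning on. Your two-stage structure is the canonical one: the deterministic case is immediate from the observation that $\{\cB(\cA(X))\in T\}=\{\cA(X)\in \cB^{-1}(T)\}$ and that $\cB^{-1}(T)$ is a legal test set when $\cB$ is measurable, and the randomized case reduces to it by realizing $\cB$'s internal randomness as an independent seed $R$, applying the per-$r$ inequality, and averaging (the inequality survives averaging because both sides are affine in the conditional probabilities and $\delta$ is constant). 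Your closing caveats are also the right ones: measurability of $\cB$ is genuinely needed for the preimage to be a valid test set, and independence of $R$ from the data and from $\cA$'s coin flips is what allows the per-$r$ bounds to be integrated without any degradation of $(\epsilon,\delta)$; the representation of an arbitrary randomized map as a jointly measurable $b(z,R)$ with independent $R$ is the standard modeling convention (and holds for Markov kernels on standard Borel spaces), so nothing is missing.
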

This is natural because otherwise the whole point of differential privacy will be forfeited. Also, the definition automatically allows for cases when the sensitive data are accessed more than once.
\begin{lemma}[Composition rule]\label{lemma:composition}
If algorithm $\cA_1$ is $(\epsilon_1,\delta_1)$-DP, and $\cA_2$ is $(\epsilon_2,\delta_2)$-DP then $(\cA_1\otimes\cA_2)$ is $(\epsilon_1+\epsilon_2,\delta_1+\delta_2)$-DP.
\end{lemma}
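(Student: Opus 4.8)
The plan is to prove the two-fold case (the $k$-fold form used in practice then follows by induction). I read $\cA_1\otimes\cA_2$ as the product mechanism that on input $X$ outputs the pair $(\cA_1(X),\cA_2(X))$ using independent internal randomness, so that for fixed $X$ the joint output law is the product of the two marginal laws. Fix neighbors $X,Y$ with $d(X,Y)\le 1$ and a product-measurable event $S$. Writing $S_a=\{b:(a,b)\in S\}$ for the section at a first-coordinate value $a$ and setting $g(a):=\P(\cA_2(X)\in S_a)$ and $h(a):=\P(\cA_2(Y)\in S_a)$, Fubini gives $\P((\cA_1(X),\cA_2(X))\in S)=\E_{a\sim\cA_1(X)}[g(a)]$ and $\P((\cA_1(Y),\cA_2(Y))\in S)=\E_{a\sim\cA_1(Y)}[h(a)]$. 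The target reduces to $\E_{a\sim\cA_1(X)}[g(a)]\le e^{\epsilon_1+\epsilon_2}\,\E_{a\sim\cA_1(Y)}[h(a)]+\delta_1+\delta_2$.

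Before that I would record a quantitative ``function form'' of the definition that I will use for $\cA_1$: if $\cA$ is $(\epsilon,\delta)$-DP then for every measurable $f:\mathrm{Range}(\cA)\to[0,1]$ one has $\E[f(\cA(X))]\le e^{\epsilon}\E[f(\cA(Y))]+\delta$. This is in the spirit of Lemma~\ref{lemma:postprocessing} but I need the expectation bound explicitly, and it follows from the layer-cake identity $\E[f(\cA(X))]=\int_0^1\P(f(\cA(X))>t)\,dt$: for each level $t$ the superlevel set $\{f>t\}$ is measurable, so the set definition gives $\P(f(\cA(X))>t)\le e^{\epsilon}\P(f(\cA(Y))>t)+\delta$, and integrating over $t\in[0,1]$ leaves exactly the additive $\delta$ since $\int_0^1\delta\,dt=\delta$.

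The crux, and the step I expect to be the only real obstacle, is to keep the additive errors clean at $\delta_1+\delta_2$. Applying the $(\epsilon_2,\delta_2)$-DP of $\cA_2$ at each fixed $a$ to the section $S_a$ gives the pointwise bound $g(a)\le e^{\epsilon_2}h(a)+\delta_2$; a naive chaining that then pushes $\cA_1$'s multiplicative factor through the whole of $g$ would scale $\delta_2$ and yield the looser constant $e^{\epsilon_1}\delta_2+\delta_2$. To avoid this I would split $g$ into a capped part and an excess, $\phi(a):=\min\{g(a),e^{\epsilon_2}h(a)\}\in[0,1]$ and $\psi(a):=(g(a)-e^{\epsilon_2}h(a))_+$, so that $g=\phi+\psi$ with $0\le\psi(a)\le\delta_2$ uniformly. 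The excess is then integrated directly against the probability law $\cA_1(X)$, contributing at most $\delta_2$ with no $e^{\epsilon_1}$ multiplier, while the function form of $(\epsilon_1,\delta_1)$-DP is applied only to the bounded $\phi$, giving $\E_{a\sim\cA_1(X)}[\phi]\le e^{\epsilon_1}\E_{a\sim\cA_1(Y)}[\phi]+\delta_1\le e^{\epsilon_1+\epsilon_2}\E_{a\sim\cA_1(Y)}[h]+\delta_1$, where the last step uses $\phi(a)\le e^{\epsilon_2}h(a)$. Adding the two pieces delivers the claimed $(\epsilon_1+\epsilon_2,\delta_1+\delta_2)$ bound; apart from this capping trick, the only care required is measurability of $g,h,\phi$ for the Fubini step, which is routine once $S$ is product-measurable.
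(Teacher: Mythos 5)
Your proof is correct, but there is nothing in the paper to compare it against: the paper states this composition rule as a known property of differential privacy (it is the standard basic composition theorem, cf.\ Theorem~3.16 and Appendix~B of \citet{dwork2013algorithmic}) and never proves it. Evaluating your argument on its own merits: the reduction via Fubini to sections $S_a$, the ``function form'' of $(\epsilon,\delta)$-DP obtained from the layer-cake identity, and especially the capping decomposition $g=\phi+\psi$ with $\phi=\min\{g,e^{\epsilon_2}h\}$ and $0\le\psi\le\delta_2$ are all sound, and the capping step is exactly the point where care is needed --- a naive chaining inevitably multiplies one of the $\delta$'s by an $e^{\epsilon}$ factor, whereas your excess term $\psi$ is integrated directly against the law of $\cA_1(X)$ and so contributes only $\delta_2$. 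This is essentially the textbook proof of the tight $(\epsilon_1+\epsilon_2,\delta_1+\delta_2)$ statement. Two minor remarks: your non-adaptive reading of $\cA_1\otimes\cA_2$ (independent internal randomness, output the pair) is consistent with the paper's notation, and your argument extends verbatim to adaptive composition, since the $(\epsilon_2,\delta_2)$ guarantee of $\cA_2$ is applied separately for each fixed first-coordinate value $a$, which is precisely what adaptivity requires; also, the measurability of $g$, $h$, and $\phi$ that you flag is indeed supplied by the Fubini--Tonelli theorem for product-measurable $S$, so no gap remains there.
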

We will describe more advanced properties of DP as we need in Section~\ref{sec:sgld}.


\section{Posterior sampling and differential privacy}
In this section, we make a simple observation that under boundedness condition of a log-likelihood, getting one single sample from the posterior distribution (denoted by ``\OPS mechanism'' from here onwards) preserves a degree of differential privacy for free. Then we will cite classic results in statistics and show that this sample is a consistent estimator in a Frequentist sense and near-optimal in many cases.

\subsection{Implicitly Preserving Differential Privacy}

To begin with, we show that sampling from the posterior distribution is intrinsically differentially private.
\begin{theorem}\label{thm:private_posterior}
If $\sup_{\vct x\in \cX,\vct \theta\in\Theta}\left|\log p(\vct x|\vct \theta)\right| \leq B$, releasing one sample from the posterior distribution $p(\vct \theta|X^n)$ with any prior preserves $4B$-differential privacy. Alternatively, if $\cX$ is a bounded domain (e.g., $\|x\|_*\leq R \;\forall \vct x\in \cX$) and $\log p(\vct x|\vct \theta)$ is an $L$-Lipschitz function in $\|\cdot\|_*$ for any $\vct \theta\in \Theta$, then releasing one sample from the posterior distribution preserves $4LR$-differential privacy.
\end{theorem}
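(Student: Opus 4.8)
The plan is to establish the pure $\epsilon$-DP guarantee directly from the definition by controlling the pointwise ratio of the output densities on neighboring datasets. Since the \OPS mechanism outputs a draw $\vct\theta\sim\pi(\vct\theta\mid X)$, it suffices to show that for any two datasets $X,Y$ with $d(X,Y)\le 1$ and any $\vct\theta\in\Theta$, the density ratio $\pi(\vct\theta\mid X)/\pi(\vct\theta\mid Y)$ is at most $e^{4B}$ (respectively $e^{4LR}$). Integrating such a bound over any measurable $S$ yields $\P(\cA(X)\in S)=\int_S \pi(\vct\theta\mid X)\,d\vct\theta \le e^{4B}\int_S \pi(\vct\theta\mid Y)\,d\vct\theta = e^{4B}\P(\cA(Y)\in S)$, which is exactly $4B$-DP with $\delta=0$; the reverse inequality comes for free by swapping the roles of $X$ and $Y$.

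First I would write the ratio explicitly. Without loss of generality take $X=\{x_1,\dots,x_{n-1},x_n\}$ and $Y=\{x_1,\dots,x_{n-1},x_n'\}$ differing only in the last record. Using the posterior formula, the prior $\pi(\vct\theta)$ and the shared factors $\prod_{i<n}p(x_i\mid\vct\theta)$ cancel, leaving
\[
\frac{\pi(\vct\theta\mid X)}{\pi(\vct\theta\mid Y)} = \underbrace{\frac{p(x_n\mid\vct\theta)}{p(x_n'\mid\vct\theta)}}_{\text{likelihood ratio}} \cdot \underbrace{\frac{Z_Y}{Z_X}}_{\text{normalizer ratio}},
\]
where $Z_X,Z_Y$ denote the two evidence integrals. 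So I need to control two factors, each of which involves only the single swapped point.

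The likelihood ratio is immediate: under $\sup_{x,\vct\theta}|\log p(x\mid\vct\theta)|\le B$ we have $p(x\mid\vct\theta)\in[e^{-B},e^{B}]$, hence $p(x_n'\mid\vct\theta)\le e^{2B}\,p(x_n\mid\vct\theta)$ pointwise and the likelihood ratio is at most $e^{2B}$. The key observation for the normalizer is that this same pointwise multiplicative bound survives integration against the nonnegative weight $g(\vct\theta):=\pi(\vct\theta)\prod_{i<n}p(x_i\mid\vct\theta)$, giving $Z_Y=\int g(\vct\theta)p(x_n'\mid\vct\theta)\,d\vct\theta \le e^{2B}\int g(\vct\theta)p(x_n\mid\vct\theta)\,d\vct\theta = e^{2B}Z_X$. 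Multiplying the two $e^{2B}$ factors produces $e^{4B}$. For the Lipschitz variant I would simply replace the opening bound by $|\log p(x_n\mid\vct\theta)-\log p(x_n'\mid\vct\theta)|\le L\|x_n-x_n'\|_* \le 2LR$ (triangle inequality together with $\|x\|_*\le R$), which plays exactly the role of $2B$ above and propagates identically to yield $e^{4LR}$.

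I expect no genuine obstacle here; the one point worth flagging is the normalizer ratio, since it is tempting to hope the two evidence integrals cancel. They do not (each depends on its own dataset), and it is precisely this term that doubles the exponent from $2B$ to $4B$. The argument is clean once one notices that a pointwise multiplicative bound on an integrand transfers to the integral, so the whole proof reduces to the elementary sensitivity bound on a single log-likelihood term.
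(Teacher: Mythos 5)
Your proposal is correct and follows essentially the same route as the paper: decompose the posterior ratio into a pointwise likelihood ratio and a ratio of evidence integrals, bound each factor by $e^{2B}$ (respectively $e^{2LR}$) using the boundedness or Lipschitz assumption, and multiply. Your treatment of the normalizer factor (a pointwise multiplicative bound on the integrand transfers to the integral) is in fact a cleaner rendering of the paper's own, slightly more convoluted manipulation of that same factor, but it is not a different argument.
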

\begin{proof}
The posterior distribution $p(\vct \theta|\vct x_1,...,\vct x_n) = \frac{\prod_{i=1}^n p(\vct x_i|\vct \theta)p(\vct \theta)}{\int_{\vct \theta} \prod_{i=1}^n p(\vct x_i|\vct \theta)p(\vct \theta) d\vct \theta}$. For any $\vct x_1,...,\vct x_n, x'_k$,
The ratio can be factorized into
\begin{align*}
\frac{p(\vct \theta|\vct x_1,...,x'_k,...,\vct x_n)}{p(\vct \theta|\vct x_1,...,\vct x_k,...,\vct x_n)} = \underbrace{\frac{p(\vct x_k'|\vct \theta)\prod_{i=1:n, i\neq k} p(\vct x_i|\vct \theta)p(\vct \theta)}{\prod_{i=1}^n p(\vct x_i|\vct \theta)p(\vct \theta)}}_\text{Factor 1}\times \underbrace{\frac{\int_{\vct \theta} \prod_{i=1}^n p(\vct x_i|\vct \theta)p(\vct \theta) d\vct \theta}{ \int_{\vct \theta} p(\vct x_k'|\vct \theta)\prod_{i=1:n, i\neq k} p(\vct x_i|\vct \theta)p(\vct \theta)d\vct \theta }}_\text{Factor 2}.
\end{align*}
\vspace{-2em}

It follows that
\begin{align*}
\text{Factor 1} &= \frac{p(\vct x'_k|\vct \theta)}{p(\vct x_k|\vct \theta)} = e^{\log p(\vct x'_k|\vct \theta) -\log p(\vct x_k|\vct \theta)} \leq e^{2B},\\
\text{Factor 2} &= \frac{\int_{\vct \theta} \prod_{i\neq k} p(\vct x_i|\vct \theta)p(\vct \theta) p(\vct x_k)d\vct \theta}{ \int_{\vct \theta} p(\vct x_k'|\vct \theta)\prod_{i\neq k} p(\vct x_i|\vct \theta)p(\vct \theta)d\vct \theta }
= \frac{\int_{\vct \theta} \prod_{i\neq k} p(\vct x_i|\vct \theta)p(\vct \theta) p(\vct x_k'|\vct \theta) \frac{p(\vct x_k)}{p(\vct x_k')} d\vct \theta}{ \int_{\vct \theta} p(\vct x_k'|\vct \theta)\prod_{i\neq k} p(\vct x_i|\vct \theta)p(\vct \theta)d\vct \theta }\\
&= \frac{\int_{\vct \theta} \prod_{i\neq k} p(\vct x_i|\vct \theta)p(\vct \theta) p(\vct x_k'|\vct \theta) e^{\log p(\vct x_k|\vct \theta) - \log p(\vct x_k'|\vct \theta)} d\vct \theta}{ \int_{\vct \theta} p(\vct x_k'|\vct \theta)\prod_{i\neq k} p(\vct x_i|\vct \theta)p(\vct \theta)d\vct \theta }\\
&\leq e^{2B} \frac{m(\vct x_1,...,\vct x_k',...,\vct x_n)}{m(\vct x_1,...,\vct x_k',...,\vct x_n)} = e^{2B}.
\end{align*}
where we use $m(X)$ to denote the marginal distribution. As a result, the whole thing is bounded by $e^{4B}$.

Alternatively, we can use the Lipschitz constant and boundedness to get $\log p(\vct x'_k|\vct \theta) -\log p(\vct x_k|\vct \theta) \leq L\|x'_k-\vct x_k\|_* \leq 2LR$.
\end{proof}

Readers familiar with differential privacy must have noticed that this is actually an instance of the exponential mechanism~\citep{mcsherry2007mechanism}, a general procedure that preserves privacy while making outputs with higher utility exponentially more likely. If one sets the utility function to be the log-likelihood and the privacy parameter being $4B$, then we get exactly the one-posterior sample mechanism.  This exponential mechanism point of view provides an an simple extension which allows us to specify $\epsilon$ by simply scaling the log-likelihood (see Algorithm~\ref{alg:OPS}). We will overload the notation \OPS to also represent this mechanism where we can specify $\epsilon$. The nice thing about this algorithm is that there is almost zero implementation effort to extend all posterior sampling-based Bayesian learning models to have differentially privacy of any specified $\epsilon$.

\begin{algorithm} [tb]                   
\caption{One-Posterior Sample (\OPS) estimator}          
\label{alg:OPS}                           
\begin{algorithmic}                    
    \INPUT{ Data $X$, $\log$-likelihood function $\ell(\cdot|\cdot)$ satisfying $\sup_{\vct x, \vct \theta}\|\ell(\vct x|\vct \theta)\| \leq B$  a prior $\pi(\cdot)$. Privacy loss $\epsilon$.}
    \STATE{1.} Set $\rho= \min\{1,\frac{\epsilon}{4B}\}$.
    \STATE{2.} Re-define $\log$-likelihood function and the prior
    $\ell'(\cdot|\cdot) := \rho\ell(\cdot|\cdot)$ and $\pi'(\cdot) :=(\pi(\cdot) )^{\rho}.$
 \OUTPUT{ $\hat{ \vct \theta} \sim P(\vct \theta| X) \propto \exp\left(\sum_{i=1}^N \ell'(\vct \theta|\vct x_i)\right)\pi'(\vct \theta)$.}
\end{algorithmic}
\end{algorithm}

\paragraph{Assumption on the boundedness.}
The boundedness on the loss-function (log-likelihood here) is a standard assumption in many DP works \citep{chaudhuri2011differentially,bassily2014private,song2013stochastic,kifer2012private}.
Lipschitz constant $L$ is usually small for continuous distributions (at least when the parameter space $\Theta$ is bounded). This is a bound on $\log p(\vct x|\vct \theta)$) so as long as $p(\vct x|\vct \theta)$ does not increase or decrease super exponentially fast at any point, $L$ will be a small constant. $R$ can also be made small by a simple preprocessing step that scales down all data points. In the aforementioned papers that assume $L$, it is typical that they also assume $R=1$ for convenience. So we will do the same. In practice, we can
algorithmically remove large data points from the data by some predefined threshold or using the ``Propose-Test-Release'' framework in \citep{dwork2009differential} or perform weighted training where we can assign lower weight to data points with large magnitude. Note that this is a desirable step for the robustness to outliers too. Exponential families (in Hilbert space) are an example, see e.g. \citet{bialek2001predictability,hofmann2008kernel,wainwright2008graphical}.




\subsection{Consistency and Near-Optimality}\label{sec:cons_nearoptimality}
Now we move on to study the consistency of the \OPS estimator. In great generality, we will show that the one-posterior sample estimator is consistent whenever the Bayesian model is posterior consistent. Since the consistency in Bayesian methods can have different meanings, we briefly describe two of them according to the nomenclature in \citet{orbanz2012lecture}.
\begin{definition}[Posterior consistency in the Bayesian Sense]\label{def:pos_cons1}
 For a prior $\pi$, we say the model is posterior consistent in the Bayesian sense,  if $\vct \theta \sim \pi(\vct \theta)$, $\vct x_1,...,\vct x_n \sim p_{\vct \theta}$, and the posterior
 \vspace{-1em}
 $$\pi(\vct \theta|\vct x_1,...,\vct x_n) \overset{\text{weakly}}{\longrightarrow} \delta_{\vct \theta}  \text{ a.s. } \pi.$$
 $\delta_{\vct \theta}$ is the dirac-delta function at $\vct \theta$.
\end{definition}
In great generality, Doob's well-known theorem guarantees posterior consistency in the Bayesian sense for a model with any prior under no conditions except identifiability and measurability. A concise statement of Doob's result can be found in \citet[Theorem 10.10]{vandervaart2000asymptotic}).

An arguably more reasonable definition is given below. It applies to the case when the statistician who chooses the prior $\pi$ does not know about the true parameter.
\begin{definition}[Posterior consistency in the Frequentist Sense]\label{def:pos_cons2}
 For a prior $\pi$, we say the model is posterior consistent in the Frequentist sense,  if for every $\vct \theta_0\in\Theta$, $\vct x_1,...,\vct x_n \sim p_{\vct \theta}$, the posterior
 $$\pi(\vct \theta|\vct x_1,...,\vct x_n) \overset{\text{weakly}}{\longrightarrow} \delta_{\vct \theta_0}  \text{ a.s. }p_{\vct \theta_0}. $$
\end{definition}
This type of consistency is much harder to satisfy especially when $\Theta$ is an infinite dimensional space, in which case the consistency often depends on the specific priors to use. A promising series of results on the consistency for Bayesian nonparametric models can be found in  \citet{ghosal2010dirichlet}).

Regardless which definition one favors, the key notion of consistency is that the posterior distribution to concentrates around the true underlying $\vct \theta$ that generates the data.
\begin{proposition}
The one-posterior sample estimator is consistent \emph{if and only if} the Bayesian model is posterior consistent (in either Definition~\ref{def:pos_cons1}~or~\ref{def:pos_cons2} ).
\end{proposition}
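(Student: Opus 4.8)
The plan is to reduce everything to a single scalar quantity: the posterior mass placed outside an $\varepsilon$-ball around the true parameter, $Q_n := \pi(\{\vct\theta : d(\vct\theta,\vct\theta_0) > \varepsilon\} \mid \vct x_1,\dots,\vct x_n)$, viewed as a $[0,1]$-valued random variable depending on the data. Both ``posterior consistency'' and ``consistency of the \OPS estimator'' will be recast as statements about $Q_n \to 0$, and the equivalence will then follow from elementary facts about modes of convergence.

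First I would record the standard metric-space characterization of weak convergence to a point mass: $\mu_n$ converges weakly to $\delta_c$ if and only if $\mu_n(U)\to 1$ for every open neighborhood $U$ of $c$ (equivalently $\mu_n(B(c,\varepsilon))\to 1$ for all $\varepsilon>0$). One direction uses a bounded continuous ``bump'' $f_\varepsilon$ with $\mathbbm 1_{B(c,\varepsilon/2)} \le f_\varepsilon \le \mathbbm 1_{B(c,\varepsilon)}$ and the defining property $\int f_\varepsilon\,d\mu_n \to f_\varepsilon(c)=1$; the converse is a routine split of $\int g\,d\mu_n$ over the ball and its complement. Applied with $c=\vct\theta_0$ and $\mu_n = \pi(\cdot\mid\vct x_1,\dots,\vct x_n)$, this shows that the posterior-consistency hypothesis of Definition~\ref{def:pos_cons2} is exactly the statement that $Q_n \to 0$ almost surely under $p_{\vct\theta_0}$ (and Definition~\ref{def:pos_cons1} is the same statement after an additional outer integration over $\vct\theta\sim\pi$).

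Next I would connect $Q_n$ to the \OPS sample by conditioning on the data. Since $\hat{\vct\theta}_n$ is drawn from $\pi(\cdot\mid\vct x_1,\dots,\vct x_n)$, the tower property gives $\P\big(d(\hat{\vct\theta}_n,\vct\theta_0) > \varepsilon\big) = \E\big[\,Q_n\,\big]$, where the outer probability is over both the data and the sampling noise. For the ``if'' direction, posterior consistency gives $Q_n\to 0$ a.s.; since $0\le Q_n\le 1$, the bounded (dominated) convergence theorem yields $\E[Q_n]\to 0$, i.e. $\hat{\vct\theta}_n \to \vct\theta_0$ in probability. For the ``only if'' direction, consistency of the estimator means $\E[Q_n]\to 0$, and because $Q_n\ge 0$ this is $L^1$ convergence, hence $Q_n\to 0$ in probability, which is the posterior-concentration statement. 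The whole argument never inspects the detailed shape of the posterior, so it is identical under either definition; only the outer probability space (with or without the prior draw of $\vct\theta$) changes.

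The delicate point, which I would flag explicitly, is the matching of convergence modes across the two layers of randomness. A single draw from a posterior that concentrates cannot be expected to converge almost surely even when the posterior does: if $\mu_n$ puts mass $1/n$ at a fixed distant point, then $\mu_n$ still converges weakly to $\delta_c$, yet independent samples land at the distant point infinitely often by Borel--Cantelli. Consequently ``consistency of the \OPS estimator'' must be read as convergence in probability, and the rigorous equivalence is between in-probability consistency of $\hat{\vct\theta}_n$ and in-probability concentration of $Q_n$; the almost-sure posterior consistency assumed in Definitions~\ref{def:pos_cons1}--\ref{def:pos_cons2} is then a (strictly stronger) sufficient condition, delivered to the estimator side through bounded convergence. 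I expect articulating this gap cleanly---rather than any analytic computation---to be the only real obstacle.
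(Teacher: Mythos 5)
Your proposal is correct, and it is essentially a rigorous expansion of what the paper dispatches in one sentence: the paper's entire proof is the remark that weak convergence to a point mass is equivalent to convergence in probability, applied to the sample $\hat{\vct\theta}_n$ whose (conditional) law is the posterior. Your route through the scalar $Q_n = \pi\bigl(\{\vct\theta : d(\vct\theta,\vct\theta_0)>\varepsilon\}\mid \vct x_1,\dots,\vct x_n\bigr)$, the tower property $\P\bigl(d(\hat{\vct\theta}_n,\vct\theta_0)>\varepsilon\bigr)=\E[Q_n]$, bounded convergence in one direction and Markov's inequality in the other, makes fully explicit the two layers of randomness that the paper's one-liner silently conflates. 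The added value of your write-up is precisely the point you flag at the end: the ``only if'' direction only recovers $Q_n\to 0$ \emph{in probability}, whereas Definitions~\ref{def:pos_cons1} and~\ref{def:pos_cons2} demand almost-sure weak convergence of the posterior; and conversely a single posterior draw can never be expected to converge almost surely (your mass-$1/n$-at-a-distant-point example is exactly right). So the proposition as stated is only an equivalence if ``consistent'' on both sides is read in the in-probability sense, with a.s.\ posterior consistency being a strictly stronger sufficient condition. The paper's proof glosses over this mode-of-convergence mismatch entirely; your version is the defensible form of the claim, at the modest cost of a paragraph rather than a line.
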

\begin{proof}
The equivalence follows from the standard equivalence of convergence weakly and convergence in probability when a random variable converges weakly to a point mass.
\end{proof}

How about the rate of convergence? In the low dimensional setting when $\vct \theta\in\Theta\subset \R^d$ and $p_{\vct \theta}(\vct x)$ is suitably differentiable and the prior is supported at the neighborhood of the true parameter, then by the Bernstein-von Mises theorem \citep{le1986bernstein}, the posterior mean is an asymptotically efficient estimator and the posterior distribution converges in $L_1$-distance to a normal distribution with covariance being the inverse Fisher Information.
\begin{proposition}\label{prop:efficiency}
Under the regularity conditions where Bernstein-von Mises theorem holds, the One-Posterior sample $\hat{\vct \theta} \sim \pi(\vct \theta|\vct x_1,..,\vct x_n) $ obeys
$$
\sqrt{n}(\hat{\vct \theta} - \vct \theta_0) \overset{\text{weakly}}{\longrightarrow}  \cN(0, 2\I^{-1}),
$$
i.e., the One-Posterior sample estimator has an asymptotic relative efficiency of 2.
\end{proposition}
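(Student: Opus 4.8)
The plan is to decompose the single posterior draw into a ``center'' plus a fluctuation and to observe that the asymptotic variance doubles because the two pieces carry \emph{independent} randomness. Write
$$
\sqrt{n}(\hat{\vct\theta} - \vct\theta_0) = \underbrace{\sqrt{n}(\hat{\vct\theta} - \tilde{\vct\theta}_n)}_{=:Z_n} + \underbrace{\sqrt{n}(\tilde{\vct\theta}_n - \vct\theta_0)}_{=:W_n},
$$
where $\tilde{\vct\theta}_n$ is the maximum likelihood estimator (or any efficient centering). The term $W_n$ is a function of the data only, while $Z_n$ is, conditionally on the data, the remaining randomness injected by drawing a single sample from the posterior.

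First I would invoke the Bernstein--von Mises theorem in its total-variation form (e.g.\ \citet[Theorem 10.1]{vandervaart2000asymptotic}): under the stated regularity conditions, conditionally on $X^n$ the law of $Z_n$ converges in total variation to $\cN(0,\I^{-1})$, and this convergence holds in $P_{\vct\theta_0}$-probability. In particular the conditional characteristic function $\phi_n(t):=\E[e^{i t\T Z_n}\mid X^n]$ converges in probability to $e^{-\frac12 t\T\I^{-1} t}$ for each $t$. Separately, the asymptotic efficiency of the MLE gives $W_n \overset{\text{weakly}}{\longrightarrow}\cN(0,\I^{-1})$ over the randomness in the data.

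Next I would establish \emph{joint} convergence of $(Z_n,W_n)$ to a pair of \emph{independent} Gaussians via the joint characteristic function. Conditioning on $X^n$ and using that $W_n$ is $X^n$-measurable,
$$
\E\!\left[e^{i t\T Z_n + i s\T W_n}\right] = \E\!\left[e^{i s\T W_n}\,\phi_n(t)\right].
$$
I would split this into $e^{-\frac12 t\T\I^{-1}t}\,\E[e^{is\T W_n}]$ plus a remainder $\E[e^{is\T W_n}(\phi_n(t)-e^{-\frac12 t\T\I^{-1}t})]$. The leading term converges to $e^{-\frac12 t\T\I^{-1}t}e^{-\frac12 s\T\I^{-1}s}$ by the asymptotic normality of $W_n$; the remainder vanishes by bounded convergence, since $|e^{is\T W_n}|\le 1$ and $\phi_n(t)\to e^{-\frac12 t\T\I^{-1}t}$ in probability with a uniformly bounded integrand. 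Thus the joint characteristic function tends to the product of two $\cN(0,\I^{-1})$ characteristic functions, so $(Z_n,W_n)$ converges weakly to independent $\cN(0,\I^{-1})$ variables, and by the continuous mapping theorem $Z_n+W_n \overset{\text{weakly}}{\longrightarrow} \cN(0,2\I^{-1})$.

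The main obstacle is precisely the interaction between the two distinct sources of randomness: $Z_n$ is only asymptotically Gaussian \emph{conditionally} on the data (and the BvM convergence is only in probability), whereas $W_n$ is random through the data itself. The characteristic-function-plus-bounded-convergence step above is the technical device that decouples them; the facts that make it go through are that the conditional limit $\cN(0,\I^{-1})$ does not depend on $X^n$ and that characteristic functions are bounded by $1$. A secondary point to verify is that the centering $\tilde{\vct\theta}_n$ agrees up to $o_P(n^{-1/2})$ with the internal BvM centering $\Delta_{n,\vct\theta_0}=\I^{-1}\frac{1}{\sqrt n}\sum_i \dot\ell_{\vct\theta_0}(\vct x_i)$, so that $W_n$ indeed carries the efficient asymptotic variance $\I^{-1}$ and the convolution yields exactly $2\I^{-1}$.
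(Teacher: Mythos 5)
Your proof is correct and takes essentially the same route as the paper's: the same decomposition of $\sqrt{n}(\hat{\vct\theta}-\vct\theta_0)$ into a posterior fluctuation around a center plus the center's deviation from $\vct\theta_0$, with Bernstein--von Mises giving $\cN(0,\I^{-1})$ for the first piece, efficiency of the centering giving $\cN(0,\I^{-1})$ for the second, and asymptotic independence yielding the convolution $\cN(0,2\I^{-1})$. The only differences are cosmetic or to your credit: the paper centers at the posterior mean while you center at the MLE (immaterial, since they agree to $o_{P}(n^{-1/2})$ under the stated conditions), and your conditional characteristic-function plus bounded-convergence argument rigorously establishes the asymptotic independence that the paper merely asserts in parentheses.
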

\begin{proof}
Let the One-Posterior sample $\hat{\vct \theta} \sim \pi(\vct \theta|\vct x_1,..,\vct x_n) $. By Bernstein-von Mises theorem
$
\sqrt{n}(\hat{\vct \theta} - \tilde{\vct \theta}) \overset{\text{weakly}}{\rightarrow}  \cN(0, \I^{-1}).
$
By the asymptotical normality and efficiency of the posterior mean estimator  $\sqrt{n}(\tilde{\vct \theta} -\vct \theta_0) \overset{\text{weakly}}{\rightarrow}  \cN(0, \I^{-1}).$
The proof is complete by taking the sum of the two asymptotically independent Gaussian vectors ($\tilde{\vct \theta}$ and $\hat{\vct \theta}-\tilde{\vct \theta}$ are asymptotically independent).
\end{proof}
The above proposition suggests that in many interesting classes of parametric Bayesian models, the One-Posterior Sample estimator is asymptotically near optimal. Similar statements can also be obtained for some classes of semi-parametric and nonparametric Bayesian models \citep{ghosal2010dirichlet}, which we leave as future work.

The drawback of the above two propositions is that they are only stated for the version of the OPS when $\epsilon=4B$. Using results in \citet{de2013bayesian} and \citet{kleijn2012bernstein} for Bayesian learning under misspecified models, we can prove consistency, asymptotic normality for any $\epsilon$ and parameterize the asymptotic relative efficiency of the \OPS estimator as a function of $\epsilon$. The key idea is that when scaling the log-likelihood and sample from a different distribution, we are essentially fitting a model that may not include the data-generating true distribution.  \citet{de2013bayesian} shows that under mild conditions, when the model is misspecified, the posterior distribution will converge to a point mass $\vct \theta^*$ that minimizes the KL-divergence between between the true distribution and the corresponding distribution in the misspecified model. $\vct \theta^*$ is essentially MLE and in our case, since we only scaled the distribution, the MLE will remain exactly the same. \citet{de2013bayesian}'s result is quite general and covers both parametric and nonparametric Bayesian models and whenever their assumptions hold, the \OPS estimator is consistent. Using a similar argument and the modifed Bernstein-Von-Mises theorem in \citet{kleijn2012bernstein}, we can prove asymptotic normality and near optimality for the subset of problems where regularities of MLE hold.
\begin{proposition}\label{prop:general_eps}
Under the same assumption as Proposition~\ref{prop:efficiency}, if we set a different $\epsilon$ by rescaling the log-likelihood by a factor of $\frac{\epsilon}{4B}$, then the the One-Posterior sample estimator obeys $$
\sqrt{n}(\hat{\vct \theta} - \vct \theta_0) \overset{\text{weakly}}{\longrightarrow}  \cN\left(0, (1+\frac{4B}{\epsilon})\I^{-1}\right),
$$
in other word, the estimator has an ARE of $(1+\frac{4B}{\epsilon})$.
\end{proposition}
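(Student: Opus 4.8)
The plan is to repeat the two-step decomposition used in the proof of Proposition~\ref{prop:efficiency}, but with the ordinary Bernstein--von Mises limit replaced by its tempered counterpart. Rescaling the log-likelihood by $\rho := \epsilon/(4B)$ turns the \OPS target into the fractional-power (``tempered'') posterior $\pi_\rho(\vct\theta\mid X)\propto\big(\prod_{i=1}^n p(\vct x_i\mid\vct\theta)\big)^{\rho}\,\pi(\vct\theta)^{\rho}$. Since raising a strictly positive, continuous prior to a fixed power $\rho$ leaves its behaviour in a shrinking neighbourhood of $\vct\theta_0$ asymptotically inert, the prior tempering can be discarded for the limit, and I would work with $\exp(\rho\,\ell_n(\vct\theta))\pi(\vct\theta)$ where $\ell_n(\vct\theta)=\sum_i\log p(\vct x_i\mid\vct\theta)$. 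The crucial structural fact, already noted before the statement, is that multiplying $\ell_n$ by $\rho>0$ does not move its maximizer, so the centering of the tempered posterior is still the ordinary MLE $\tilde{\vct\theta}$.

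The heart of the argument is a Laplace / local-asymptotic-normality expansion of the tempered log-posterior about $\tilde{\vct\theta}$. Writing $\vct\theta=\tilde{\vct\theta}+\vct h/\sqrt n$ and using $\ell_n'(\tilde{\vct\theta})=0$ together with $\tfrac1n\ell_n''(\tilde{\vct\theta})\to-\I$, the expansion reads $\rho\,\ell_n(\tilde{\vct\theta}+\vct h/\sqrt n)=\rho\,\ell_n(\tilde{\vct\theta})-\tfrac{\rho}{2}\vct h^{T}\I\,\vct h+o_P(1)$. The factor $\rho$ multiplies the quadratic curvature but, because the linear term vanishes at the MLE, it introduces no sandwich or bias term; hence the renormalized tempered posterior of $\vct h=\sqrt n(\vct\theta-\tilde{\vct\theta})$ converges (in total variation) to $\cN(0,\tfrac1\rho\,\I^{-1})$. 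To make this rigorous I would invoke the misspecified-model machinery the authors cite: \citet{de2013bayesian} gives consistency of the tempered posterior towards the KL-minimizer, which here coincides with $\tilde{\vct\theta}$, and the modified Bernstein--von Mises theorem of \citet{kleijn2012bernstein} upgrades this to the stated Gaussian limit with the curvature rescaled by $\rho$.

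It then remains to assemble the pieces exactly as in Proposition~\ref{prop:efficiency}. The tempered BvM gives the sampling fluctuation $\sqrt n(\hat{\vct\theta}-\tilde{\vct\theta})\overset{\text{weakly}}{\to}\cN(0,\tfrac1\rho\,\I^{-1})$, while asymptotic efficiency of the centering gives $\sqrt n(\tilde{\vct\theta}-\vct\theta_0)\overset{\text{weakly}}{\to}\cN(0,\I^{-1})$. Conditioning on the data renders the posterior draw's fluctuation $\hat{\vct\theta}-\tilde{\vct\theta}$ asymptotically independent of the data-measurable $\tilde{\vct\theta}-\vct\theta_0$, so the two Gaussian contributions add their variances: $\sqrt n(\hat{\vct\theta}-\vct\theta_0)\overset{\text{weakly}}{\to}\cN\big(0,(1+\tfrac1\rho)\I^{-1}\big)$. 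Substituting $\rho=\epsilon/(4B)$ yields the covariance $(1+\tfrac{4B}{\epsilon})\I^{-1}$ and the claimed ARE, which correctly reduces to the factor $2$ of Proposition~\ref{prop:efficiency} at $\epsilon=4B$ (i.e.\ $\rho=1$).

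The main obstacle is not the bookkeeping above but the rigorous justification of the tempered Bernstein--von Mises step. One must verify that tempering the likelihood preserves the regularity conditions---local asymptotic normality with curvature scaled by $\rho$, uniform negligibility of the remainder, existence of consistent tests, and sufficient prior mass near $\vct\theta_0$---and, most delicately, that the KL-minimizer of the normalized tempered working model still equals the original MLE so that no asymptotic bias enters the centering. Confirming that the $Z(\vct\theta)=\int p(\vct x\mid\vct\theta)^{\rho}\,d\vct x$ normalization implicit in the misspecification analysis does not shift this minimizer is the step I would expect to demand the most care.
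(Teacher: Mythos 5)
Your proposal is correct and takes essentially the same route as the paper's own proof: both treat the tempered likelihood as a misspecified model whose MLE/KL-minimizer is unmoved by the scaling, both invoke the modified Bernstein--von Mises theorem of \citet{kleijn2012bernstein} (with \citet{de2013bayesian} for consistency) to get the $\tfrac{1}{\rho}\I^{-1}$ posterior spread around the MLE, and both add the asymptotically independent $\I^{-1}$ fluctuation of the MLE about $\vct\theta_0$. The only cosmetic difference is that where you motivate the tempered limit by a LAN/Laplace expansion, the paper computes it via the sandwich formula, checking $J^{-1}VJ^{-1}=\I^{-1}$ and $(nJ)^{-1}=\tfrac{4B}{n\epsilon}\I^{-1}$ explicitly.
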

\begin{proof}
By scaling the log-likelihood, we are essentially changing the correct model $p_{\vct \theta}$ to a misspecified model $(p_{\vct \theta})^{\frac{\epsilon}{4B}}$. Let the true log-likelihood be $\ell$ and the misspecified log-likelihood be $\tilde{\ell} = \frac{\epsilon}{4B} \ell$, in addition, define
$$V(\vct \theta) := \E_{\vct \theta} \nabla\tilde{\ell}(\vct \theta)\nabla\tilde{\ell}(\vct \theta)^T = \frac{\epsilon^2}{16 B^2}\E_{\vct \theta} \nabla\ell(\vct \theta) \nabla\ell(\vct \theta)^T = \frac{\epsilon^2}{16 B^2}\I(\vct \theta)$$
$$J(\vct \theta) :=  -\E_{\vct \theta} \nabla^2\tilde{\ell}(\vct \theta)= -\frac{\epsilon}{4 B}\E_{\vct \theta} \nabla^2\ell(\vct \theta) = -\frac{\epsilon}{4 B} \I(\vct \theta).$$
The last equality holds under the standard regularity conditions. By the sandwich formula, the maximum likelihood estimator $\hat{\vct \theta}$ under the misspecified model is asymptotically normal:
$$\sqrt{n}(\hat{\vct \theta} - \vct \theta^*) \overset{\text{weakly}}{\rightarrow} \cN(0, J^{-1} V J(-1)) =\cN(0, \I^{-1}) $$
where $\vct \theta^*$ defines the closest (in terms of KL-divergence) model in the misspecified class of distributions to the true distribution that generates the data. Since the difference is only in scaling, the minimum KL-divergence is obtained at $\vct \theta^* = \vct \theta$. Now under the same regularity conditions, we can invoke the modified Bernstein-Von-Mises theorem for misspecified models \citep[Lemma~2.2]{kleijn2012bernstein}, which says that the posterior distribution $p(\theta|X^n)$ (of the misspecified model) converges in distribution to $\cN(\hat{\vct \theta}, (nJ)^{-1})$. In our case, $(nJ)^{-1} = \frac{4B}{n\epsilon}\I^{-1}$. The proof is concluded by noting that the posterior sample is an independent draw.
\end{proof}
We make a few interesting remarks about the result.
\begin{enumerate}
\item Proposition~\ref{prop:general_eps} suggests that for models with bounded log-likelihood, OPS is only a factor of $(1+4B/\epsilon)$ away from being optimal. This is in sharp contrast to most previous statistical analysis of DP methods that are only tight up to a numerical constant (and often a logarithmic term). In $\ell_2$-norm, the convergence rate is $O(\frac{\sqrt{1+4B/\epsilon}\|I^{-1}\|_F}{\sqrt{n}})$. The bound depends on the dimension through the Frobenius norm which is usually $O(\sqrt{d})$. The bound can be further sharpened using assumptions on the intrinsic rank, incoherence conditions or the rate of decays in eigenvalues of the Fisher information. In $\ell_\infty$-norm, the convergence rate is $\frac{\sqrt{1+4B/\epsilon}\|I^{-1}\|_2}{\sqrt{n}}$, which does not depend on the dimension of the problem.
\item Another implication is on statistical inference. Proposition~\ref{prop:general_eps} essentially generalizes that classic results in hypothesis testing and confidence intervals, e.g., Wald test, generalized likelihood ratio test, can be directly adopted for the private learning problems, with an appropriate calibration using $\epsilon$. We can control the type I error in an asymptotically exact fashion. In addition, the trade-off with $\epsilon$ and the test power is also explicitly described, so in cases where the power of the tests are well-studied \citep{lehmann2006testing}, the same handle can be used to analyze the most-powerful-test under privacy constraints.
\item Lastly, the results in \citet{de2013bayesian} and \citet{kleijn2012bernstein} are much more general. It is easy to extend the guarantee for \OPS to handle private Bayesian learning in a fully agnostic setting and in non-iid cases. We will leave the formalization of these claims as future directions.
\end{enumerate}


\subsection{(Efficient) sampling from approximate posterior}
The privacy guarantee in Theorem~\ref{thm:private_posterior} requires sampling from the exact posterior. In practice, however, exact samplers are rare. As Bayesian models get more and more complicated, often the only viable option is to use Markov Chain Monte Carlo (MCMC) samplers which are almost never exact. There are exceptions, e.g., \citet{propp1998coupling} but they only apply to problems with very special structures. A natural question to ask is whether we can still say something meaningful about privacy when the posterior sampling is approximate. It turns out that we can, and the level of approximation in privacy is the same as the level of approximation in the sampling distribution.
\begin{proposition}\label{prop:approx_sample}
If $\cA$ that sampling from distribution $P_{X}$ preserves $\epsilon$-differential privacy, then any approximate sampling procedures $\cA'$ that produces a sample from $P'_X$ such that $\|P_X-P'_X\|_{L_1}\leq \delta$ for any $X$ preserves $(\epsilon,(1+e^\epsilon)\delta)$-differential privacy.
\end{proposition}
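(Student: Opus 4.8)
The plan is to chain the exact privacy guarantee of $\cA$ together with the closeness of the output distributions, using the elementary fact that the difference in probability assigned to any single event by two close distributions is controlled by their distance. Concretely, I would first record what each hypothesis says at the level of an arbitrary measurable event $S$. The $\epsilon$-DP property of $\cA$ gives, for every pair of neighbors $X,Y$ with $d(X,Y)\le 1$, the inequality $P_X(S)\le e^\epsilon P_Y(S)$ (the $\delta=0$ case). The approximation hypothesis, applied at both $X$ and $Y$, should give the uniform-over-events bounds $|P_X(S)-P'_X(S)|\le\delta$ and $|P_Y(S)-P'_Y(S)|\le\delta$.

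The key step is extracting that last pair of event-level bounds from the stated $L_1$ hypothesis. Here I would invoke the standard identity $\sup_S|P(S)-Q(S)| = \TV(P,Q)$ and relate total variation to the assumed $L_1$ distance, so that $\|P_X-P'_X\|_{L_1}\le\delta$ translates into $|P_X(S)-P'_X(S)|\le\delta$ for every $S$ (and likewise for $Y$). This is the only place the approximation enters, and it is also the only place requiring care: the claimed additive term $(1+e^\epsilon)\delta$ depends on the normalization convention for $\|\cdot\|_{L_1}$ (densities versus measures, and the factor of $\tfrac12$ in $\TV=\tfrac12\int|p-q|$).

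With these three pointwise inequalities in hand, the conclusion is a one-line chaining estimate. For any neighbors $X,Y$ and any event $S$,
\begin{align*}
P'_X(S) \;\le\; P_X(S)+\delta \;\le\; e^\epsilon P_Y(S)+\delta \;\le\; e^\epsilon\big(P'_Y(S)+\delta\big)+\delta \;=\; e^\epsilon P'_Y(S) + (1+e^\epsilon)\delta,
\end{align*}
which is precisely the $(\epsilon,(1+e^\epsilon)\delta)$-DP condition for $\cA'$. The first inequality uses closeness at $X$, the second uses $\epsilon$-DP of $\cA$, and the third uses closeness at $Y$ together with $e^\epsilon>0$ to distribute the factor over the additive $\delta$.

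I do not expect a genuine obstacle; the argument is a short triangle-inequality-style estimate. The only subtle points are bookkeeping: \emph{(i)} ensuring the event-level bound $|P(S)-Q(S)|\le\delta$ is asserted for \emph{all} measurable $S$ and applied at both $X$ and $Y$, and \emph{(ii)} pinning down the $\|\cdot\|_{L_1}$ normalization so that the loss comes out to exactly $(1+e^\epsilon)\delta$ rather than $(1+e^\epsilon)\delta/2$. To sidestep the convention issue entirely, one could phrase the hypothesis and the whole proof through $\TV(P_X,P'_X)\le\delta$ from the outset.
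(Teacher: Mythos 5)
Your proposal is correct and follows essentially the same argument as the paper: the identical three-step chain $P'_X(S)\le P_X(S)+\delta\le e^\epsilon P_Y(S)+\delta\le e^\epsilon P'_Y(S)+(1+e^\epsilon)\delta$, with the $L_1$ hypothesis converted to event-level bounds at both datasets. Your remark on the $L_1$ versus total-variation normalization is a fair observation the paper glosses over (with $\mathrm{TV}=\tfrac12\int|p-q|$ the additive term could even be halved), but either convention yields the stated guarantee.
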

\begin{proof}
For any $S\in \text{Range}(\cA')$, and $d(X,X')\leq 1$
\begin{align*}
\P\left(\cA'(X)\in S\right) &= \int_S dP'_X\leq \int_S dP_X +\delta\\
&\leq e^\epsilon \int_S  dP_{X'} +\delta \leq e^\epsilon \int_S  dP_{X'} \\
&\leq e^\epsilon \int_S  dP'_{X'} + (1+e^\epsilon)\delta\\
&=e^\epsilon \P\left(\cA'(X')\in S\right) + (1+e^\epsilon)\delta,
\end{align*}
This is $(\epsilon,(1+e^\epsilon)\delta)$-DP by definition.
\end{proof}
We are using $L_1$ distance of the distribution because it is a commonly accepted metric to measure the convergence rate MCMC \cite{rosenthal1995minorization}, and Proposition~\ref{prop:approx_sample} leaves a clean interface for computational analysis in determining the number of iterations needed to attain a specific level of privacy protection.

\paragraph{A note on computational efficiency.}
The (unsurprising) bad news is that even approximate sampling from the posterior is NP-Hard in general, see, e.g. \citet[Theorem~8]{sontag2011complexity}. There are however interesting results on when we can (approximately) sample efficiently. Approximation is easy for sampling LDA when $\alpha>1$ while NP-Hard when $\alpha<1$.  A more general result in \citet{applegate1991sampling} suggests that we can get a sample with arbitrarily close approximation in polynomial time for a class of near log-concave distributions. The log-concavity of the distributions would imply convexity in the log-likelihood, thus, this essentially confirms the computational efficiency of all convex empirical risk minimization problems under differential privacy constraint (see \citet{bassily2014private}).

The nice thing is that since we do not modify the form of the sampling algorithm at all, the \OPS algorithm is going to be a computationally tractable DP method whenever the Bayesian learning model of interest is proven to be computationally tractable.

This observation provides an interesting insight into the problem of computational lower bound of differential private machine learning. Unlike what is conjectured in \citet{dwork2014preserving}, our observation seems to suggest that the computational barrier is not specific to differential privacy, but rather the barrier of learning in general. The argument seems to hold at least for some class of problems, where the posterior sample achieves the optimal statistical rate and is at least $4B$-DP.


\subsection{Discussions and comparisons}
 \OPS  has a number of advantages over the state-of-the-art differentially private ERM method: objective perturbation \citep{chaudhuri2011differentially,kifer2012private} (\OBP from here onwards). \OPS works with arbitrary bounded loss functions and priors while \OBP needs a number of restrictive assumptions including twice differentiable loss functions, strongly convexity parameter to be greater than a threshold and so on. These restrictions rule out many commonly used loss functions, e.g., $\ell_1$-loss, hinge loss, Huber function just to name a few.

 Also, \OBP's privacy guarantee holds only for the exact optimal solution, which is often hard to get in practice. In contrast, \OPS works when the sample is drawn from an approximate posterior distribution. From a practical point of view, since \OPS stems from the intrinsic privacy protection of Bayesian learning, it requires very little implementation effort to deploy it for practical applications. It also requires the problem to be strong convexity with a minimum strong convexity parameter. When the condition is not satisfied, \OBP will need to add additional quadratic regularization to make it so, which may bias the problem unnecessarily.



\section{Stochastic Gradient MCMC and $(\epsilon,\delta)$-Differential privacy}\label{sec:sgld}
Given a fixed privacy budget, we see that the single posterior sample produces an optimal point estimate, but what if we want multiple samples? Can we use the privacy budget in a different way that produces many approximate posterior samples? 

 In this section we will provide an answer to it by looking at a class of Stochastic Gradient MCMC techniques developped over the past few years. We will show that they are also differentially private for free if the parameters are chosen appropriately.

The idea is to simply privately release an estimate of the gradient (as in \citet{song2013stochastic,bassily2014private}) and leverage upon the following two celebrated lemmas in differential privacy in the same way as \citet{bassily2014private} does in deriving the near-optimal $(\epsilon,\delta)$-differentially private SGD.


The first lemma is the advanced composition which allows us to trade off a small amount of $\delta$ to get a much better bound for the privacy loss due to composition.
\begin{lemma}[Advanced composition, c.f.,Theorem 3.20 in \citep{dwork2013algorithmic}]\label{lemma:adv_composition}
For all $\epsilon,\delta,\delta'\geq 0$, the class of $(\epsilon,\delta)$-DP mechanisms satisfy $(\epsilon',k\delta+\delta')$-DP under $k$-fold adaptive composition for:
$$
\epsilon'=\sqrt{2k\log(1/\delta')} \epsilon + k \epsilon(e^{\epsilon}-1).
$$
\end{lemma}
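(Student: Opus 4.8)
The plan is to follow the standard privacy-loss-plus-concentration argument of Dwork, Rothblum and Vadhan that underlies Theorem~3.20 of \citep{dwork2013algorithmic}. The central object is the \emph{privacy loss random variable}: for neighboring databases $X,Y$ and a mechanism $\cA$, define, for an output $o$ drawn according to $\cA(X)$,
$$
L(o) := \log\frac{\P(\cA(X)=o)}{\P(\cA(Y)=o)}.
$$
The key reduction is that $(\epsilon',\tilde\delta)$-DP is controlled by a tail bound on this quantity: if one shows $\P_{o\sim\cA(X)}[L(o)>\epsilon']\le \tilde\delta$ for every neighboring pair, then $\cA$ is $(\epsilon',\tilde\delta)$-DP. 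So the whole task reduces to bounding the upper tail of the total privacy loss of the $k$-fold composition.

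First I would treat the pure case $\delta=0$. Two facts about a single $\epsilon$-DP mechanism are needed: (i) its privacy loss is bounded pointwise, $|L|\le\epsilon$, since the density ratio lies in $[e^{-\epsilon},e^{\epsilon}]$; and (ii) its \emph{expected} loss is small, $\E_{o\sim\cA(X)}[L(o)]\le \epsilon(e^{\epsilon}-1)$. Fact (ii) is the quadratic-in-$\epsilon$ improvement that lets advanced composition beat naive composition, and it follows from a short calculation bounding the relevant KL divergence using $|L|\le\epsilon$. I would then set up the adaptive composition as a martingale: writing $o_1,\dots,o_k$ for the adaptively chosen outputs, the total loss is $L_{\mathrm{tot}}=\sum_{i=1}^k L_i$ with $L_i$ the loss of the $i$-th mechanism conditioned on $o_1,\dots,o_{i-1}$. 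By (ii) each conditional mean obeys $\E[L_i\mid o_{<i}]\le \epsilon(e^{\epsilon}-1)$, so the centered sequence $L_i-\E[L_i\mid o_{<i}]$ is a martingale-difference sequence bounded (essentially) by $\epsilon$ via (i). Azuma--Hoeffding then gives, with probability at least $1-\delta'$,
$$
L_{\mathrm{tot}}\le \sum_{i=1}^k \E[L_i\mid o_{<i}] + \sqrt{2k\log(1/\delta')}\,\epsilon \le k\epsilon(e^{\epsilon}-1) + \sqrt{2k\log(1/\delta')}\,\epsilon = \epsilon',
$$
which by the reduction above establishes $(\epsilon',\delta')$-DP in the $\delta=0$ case.

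To incorporate $\delta>0$ I would invoke the characterization that an $(\epsilon,\delta)$-DP mechanism is, up to an event of probability at most $\delta$ on each of $\cA(X)$ and $\cA(Y)$, indistinguishable from a surrogate mechanism whose privacy loss is bounded by $\epsilon$ pointwise. Running the martingale argument on these ``good'' surrogates controls $L_{\mathrm{tot}}$ as before, a union bound over the $k$ exceptional events contributes an extra $k\delta$ to the failure probability, and adding the $\delta'$ from Azuma yields the claimed $(\epsilon',k\delta+\delta')$-DP.

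The main obstacle is the bookkeeping in this last step together with the constant-tracking in Azuma: one must check that adaptive conditioning does not break either the pointwise bound $|L_i|\le\epsilon$ or the surrogate decomposition, and that the centered differences line up to give exactly $\sqrt{2k\log(1/\delta')}\,\epsilon$ rather than a looser bound (the slight asymmetry between $|L_i|\le\epsilon$ and the nonnegative conditional mean must be absorbed, which is where the delicate part of the Dwork--Roth proof lives). Since the statement is quoted verbatim from \citep{dwork2013algorithmic}, in the paper I would simply cite it, but the sketch above is the proof I would reconstruct.
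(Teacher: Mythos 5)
The paper does not prove this lemma at all: it is imported verbatim, with citation, as Theorem~3.20 of \citep{dwork2013algorithmic}, and your sketch (privacy-loss random variable, the KL bound $\E[L]\le\epsilon(e^{\epsilon}-1)$, Azuma--Hoeffding over the adaptive composition, and the surrogate decomposition plus union bound giving the extra $k\delta$) is a faithful reconstruction of exactly the proof in that source, including an honest flag of the one delicate point (the asymmetry between the pointwise bound $|L_i|\le\epsilon$ and the nonnegative conditional mean, which the range-based form of Hoeffding's inequality absorbs). Your proposal is therefore correct and takes the same route as the paper, namely deferring to Dwork--Roth; as you note, a citation is all that is required here.
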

\begin{remark}\label{rmk:adv_composition}
When $\epsilon = \frac{c}{\sqrt{2k\log(1/\delta')}}<1$ for some constant $c<\sqrt{\log(1/\delta')}$, we can simplify the above expression into
$
\epsilon' \leq  2c.
$
To see this, apply the inequality $e^{\epsilon}-1\leq 2\epsilon$ (easily shown via Taylor's theorem and the assumption that $\epsilon\leq 1$).
\end{remark}
In addition, we will also make use of the following lemma due to \citet{beimel2014bounds}. 
\begin{lemma}[Privacy for subsampled data. Lemma~4.4 in \citet{beimel2014bounds}.]\label{lemma:subsampling}
Over a domain of data sets $\cX^N$, if an algorithm $\cA$ is $(\epsilon,\delta)$ differentially private (with $\epsilon<1$), then for any data set $X\in \cX^N$, running $\cA$ on a uniform random $\gamma N$-entries of $X$ ensures $(2\gamma \epsilon,\delta)$-DP.
\end{lemma}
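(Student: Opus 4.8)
The plan is to run the standard subsampling decomposition and then supply the one non-obvious ingredient that actually produces the amplification. Fix two neighboring datasets $X,X'\in\cX^N$ differing only in row $i$, fix a measurable event $T$, and set $m=\gamma N$. Write $\cA'$ for the subsampling mechanism, which draws a uniformly random size-$m$ index set $S$ and outputs $\cA(X_S)$, where $X_S$ denotes the sub-dataset on the sampled indices. I would split the output probability according to whether $i\in S$, which happens with probability $m/N=\gamma$. Conditioned on $i\notin S$ the two sub-datasets coincide, $X_S=X'_S$, so this contributes a common term $a:=\E[\,\P(\cA(X_S)\in T)\mid i\notin S\,]$; conditioned on $i\in S$ I write $b:=\E[\,\P(\cA(X_S)\in T)\mid i\in S\,]$ and $b':=\E[\,\P(\cA(X'_S)\in T)\mid i\in S\,]$, so that $\P(\cA'(X)\in T)=(1-\gamma)a+\gamma b$ and $\P(\cA'(X')\in T)=(1-\gamma)a+\gamma b'$.

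Two inequalities drive the argument. The first is immediate: for each $S\ni i$ the datasets $X_S,X'_S$ are neighbors of size $m$, so $(\epsilon,\delta)$-privacy of $\cA$ followed by averaging over $S$ gives $b\le e^\epsilon b'+\delta$. The second is the crucial step, and the one I expect to be the main obstacle, because using only the first inequality recovers a useless $(\epsilon,\delta)$ bound with no amplification at all. For each $S\ni i$ and each index $j\notin S$, swapping $i$ out for $j$ yields a set $S_j\not\ni i$ whose sampled data is a neighbor of $X_S$; applying privacy of $\cA$, averaging over $j$ and then over $S$, and using the counting identity that each size-$m$ subset avoiding $i$ is produced exactly $m$ times under this swap, collapses the resulting double average back to $a$ and gives $b\le e^\epsilon a+\delta$. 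Getting this combinatorial bookkeeping right, and recognizing that one must bound the ``$i$ is sampled'' conditional against the ``$i$ is not sampled'' conditional, is the heart of the proof.

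With $b\le e^\epsilon\min(a,b')+\delta$ in hand I would finish by two cases. If $b'\le a$, bound $b\le e^\epsilon b'+\delta$ and verify $(1-\gamma)a+\gamma e^\epsilon b'\le e^{2\gamma\epsilon}[(1-\gamma)a+\gamma b']$ using $b'\le a$; if $a\le b'$, then $\P(\cA'(X')\in T)\ge a$ while $b\le e^\epsilon a+\delta$ yields $\P(\cA'(X)\in T)\le a[1+\gamma(e^\epsilon-1)]+\delta$. Both cases reduce to the elementary inequality $1+\gamma(e^\epsilon-1)\le e^{2\gamma\epsilon}$, which follows from $e^\epsilon-1\le 2\epsilon$ for $\epsilon<1$ (this is exactly where the hypothesis $\epsilon<1$ is used) together with $1+2\gamma\epsilon\le e^{2\gamma\epsilon}$, after absorbing the additive term via $\gamma\delta\le\delta$. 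Collecting the cases gives $\P(\cA'(X)\in T)\le e^{2\gamma\epsilon}\P(\cA'(X')\in T)+\delta$, and since the neighboring pair was arbitrary the same bound holds with $X$ and $X'$ interchanged, establishing $(2\gamma\epsilon,\delta)$-differential privacy.
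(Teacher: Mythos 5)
The paper never proves this lemma---it is imported verbatim as Lemma~4.4 of \citet{beimel2014bounds}, with only the informal remark that random subsampling ``spreads'' the influence of any single record. So there is no in-paper proof to compare against; what you have written is a self-contained proof of the cited result, and it is correct. It is also essentially the standard amplification-by-subsampling argument (the same skeleton as in the cited reference): condition on whether the differing index $i$ lands in the sample, which gives $\P(\cA'(X)\in T)=(1-\gamma)a+\gamma b$ and $\P(\cA'(X')\in T)=(1-\gamma)a+\gamma b'$; establish $b\le e^\epsilon b'+\delta$ directly and $b\le e^\epsilon a+\delta$ via the swap $(S,j)\mapsto S_j=(S\setminus\{i\})\cup\{j\}$; then combine. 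Your combinatorial bookkeeping checks out: each size-$m$ set $U\not\ni i$ arises from exactly $m$ pairs $(S,j)$, and $m\binom{N-1}{m}=(N-m)\binom{N-1}{m-1}$ makes the double average collapse exactly to $a$. Your case analysis also checks out: both cases reduce to $1+\gamma(e^\epsilon-1)\le e^{2\gamma\epsilon}$, which holds for $\epsilon<1$ via $e^\epsilon-1\le 2\epsilon$ (the same elementary bound the paper uses in Remark~\ref{rmk:adv_composition}), and your final additive term is in fact $\gamma\delta\le\delta$, slightly stronger than claimed. You correctly identified that the inequality $b\le e^\epsilon a+\delta$ is the step where amplification actually happens; using only $b\le e^\epsilon b'+\delta$ gives nothing.

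One cosmetic point worth a sentence in a fully rigorous write-up: when you declare $X_S$ and $X_{S_j}$ to be neighbors, you are implicitly treating inputs to $\cA$ as multisets (or assuming $\cA$ is permutation-invariant, or that the subsampler may present $x_j$ in the slot formerly occupied by $x_i$). If inputs were ordered tuples in index-sorted order, the swap could change two positions. This convention is glossed over in the literature as well, so it is not a gap, but it is the one place where your argument silently relies on a normalization.
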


To make sense of the above lemma, notice that we are subsampling uniform randomly and the probability of any single data point being sampled is only $\gamma$. Thus, if we arbitrarily perturb one of the data points, its impact is evenly spread across all data points thanks to random sampling.

Let $f: \cX^n \rightarrow \R^d$ be an arbitrary $d$-dimensional function. Define the $\ell_2$ sensitivity of $f$ to be
$$\Delta_2 f= \sup_{Y: d(X,Y)\leq 1} \|f(X)-f(Y)\|_2.$$
Suppose we want to output $f(X)$ differential privately, ``Gaussian Mechanism'' output $\hat{f}(X) = f(X) + \cN(0,\sigma^2 I_d)$ for some appropriate $\sigma$.
\begin{theorem}[Gaussian Mechanism, c.f. \citet{dwork2013algorithmic}]\label{lemma:gauss_mech}
Let $\epsilon\in(0,1)$ be arbitrary. ``Gaussian Mechanism'' with $\sigma \geq \Delta_2 f \sqrt{2\log(1.25/\delta)}/\epsilon$ is $(\epsilon,\delta)$-differentially private.
\end{theorem}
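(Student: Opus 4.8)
The plan is to reduce the $d$-dimensional statement to a one-dimensional tail estimate on the \emph{privacy loss} random variable, and then carry out the Gaussian tail computation that pins down the constant $\sqrt{2\log(1.25/\delta)}$.

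First I would fix neighboring datasets $X,Y$ with $d(X,Y)\leq 1$, write $\vct\mu_0=f(X)$ and $\vct\mu_1=f(Y)$ so that the two output laws are $P=\cN(\vct\mu_0,\sigma^2 I_d)$ and $Q=\cN(\vct\mu_1,\sigma^2 I_d)$ with $\|\vct\mu_0-\vct\mu_1\|_2\leq\Delta_2 f=:\Delta$. I would then invoke the standard reduction that it suffices to control the privacy loss: if $\P_{Z\sim P}\!\left[\log\frac{dP}{dQ}(Z)>\epsilon\right]\leq\delta$, then $P(S)\leq e^\epsilon Q(S)+\delta$ for every measurable $S$. This is proved in one line by splitting $S$ into the part $S_1$ where the log-density ratio exceeds $\epsilon$ (which contributes at most $\delta$) and its complement in $S$, where $dP\leq e^\epsilon\,dQ$ holds pointwise (contributing at most $e^\epsilon Q(S)$). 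Since DP must hold for both orderings of $X,Y$, the symmetric bound completes the definition.

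Next I would compute the privacy loss explicitly. Writing $Z=\vct\mu_0+\vct v$ with $\vct v\sim\cN(0,\sigma^2 I_d)$, a direct expansion gives
\begin{align*}
\log\frac{dP}{dQ}(Z)=\frac{\|Z-\vct\mu_1\|^2-\|Z-\vct\mu_0\|^2}{2\sigma^2}=\frac{2\langle\vct v,\vct\mu_0-\vct\mu_1\rangle+\|\vct\mu_0-\vct\mu_1\|^2}{2\sigma^2}.
\end{align*}
Only the projection of $\vct v$ onto the direction $\vct\mu_0-\vct\mu_1$ enters, so the problem is genuinely one-dimensional; by rotational symmetry of the spherical Gaussian and monotonicity in $\|\vct\mu_0-\vct\mu_1\|$ I may take the worst case $\|\vct\mu_0-\vct\mu_1\|=\Delta$. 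The loss is then Gaussian, $\log\frac{dP}{dQ}(Z)\sim\cN\!\left(\frac{\Delta^2}{2\sigma^2},\frac{\Delta^2}{\sigma^2}\right)$, and the event $\{\log\frac{dP}{dQ}(Z)>\epsilon\}$ becomes $\{t>a\}$ for a scalar $t\sim\cN(0,\sigma^2)$ and threshold $a=\frac{\sigma^2\epsilon}{\Delta}-\frac{\Delta}{2}$.

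The remaining, and main, step is to show $\P[t>a]\leq\delta$ under $\sigma\geq\Delta\sqrt{2\log(1.25/\delta)}/\epsilon$. I would apply the Mills-ratio tail bound $\P[t>a]\leq\frac{\sigma}{a\sqrt{2\pi}}\exp(-a^2/(2\sigma^2))$ (valid for $a>0$, which holds for the prescribed $\sigma$ once $\epsilon<1$), take logarithms, and verify that $\frac{a^2}{2\sigma^2}\geq\log\!\big(\tfrac{\sigma}{a\sqrt{2\pi}}\big)+\log(1/\delta)$ for the stated $\sigma$. This is the fiddly part: substituting $\sigma=c\Delta/\epsilon$, one expands $a^2/(2\sigma^2)$, discards lower-order terms using $\epsilon<1$, and tracks constants carefully; the constant $1.25$ (rather than $1$) is exactly what absorbs the $\frac{\sigma}{a\sqrt{2\pi}}$ prefactor and the cross terms so that $c=\sqrt{2\log(1.25/\delta)}$ suffices. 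I expect this constant-chasing, rather than any conceptual difficulty, to be the crux, and the hypothesis $\epsilon\in(0,1)$ is used precisely to keep these error terms controlled.
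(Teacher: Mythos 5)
First, a point of reference: the paper gives no proof of Theorem~\ref{lemma:gauss_mech} at all --- it is imported as a known result from \citet{dwork2013algorithmic} and used as a black box inside Theorem~\ref{thm:private_SGLD}. So the only meaningful check is whether your argument would reconstruct the cited result, and structurally it does: the reduction of $(\epsilon,\delta)$-DP to a tail bound on the privacy loss (your $S_1/S_2$ splitting), the computation $\log\frac{dP}{dQ}(Z)=\frac{2\langle \vct v,\vct \mu_0-\vct \mu_1\rangle+\|\vct \mu_0-\vct \mu_1\|^2}{2\sigma^2}$, the reduction to one dimension by rotational invariance, the worst-case claim $\|\vct \mu_0-\vct \mu_1\|=\Delta$ (correct, since the standardized threshold $\frac{\sigma\epsilon}{m}-\frac{m}{2\sigma}$ is decreasing in $m$), and the translation to the scalar event $\{t>a\}$ with $a=\frac{\sigma^2\epsilon}{\Delta}-\frac{\Delta}{2}$ are all right, and this is exactly the skeleton of the proof in Appendix~A of \citet{dwork2013algorithmic}.

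The one substantive reservation is that the step you defer is not a routine epilogue; it is the entire technical content of the theorem, and your proposal is not a proof without it. Verifying $\frac{a^2}{2\sigma^2}\geq\log\bigl(\frac{\sigma}{a\sqrt{2\pi}}\bigr)+\log(1/\delta)$ under $\sigma=\Delta\sqrt{2\log(1.25/\delta)}/\epsilon$ is precisely where both hypotheses are consumed: writing $\sigma=c\Delta/\epsilon$, the standardized threshold is $c-\frac{\epsilon}{2c}$, and the assumption $\epsilon<1$ controls the cross term $\frac{\epsilon}{2c}$, while one also needs $c$ bounded below (equivalently $\delta$ not too close to $1$, e.g.\ $c\geq 3/2$) so that the Mills-ratio prefactor $\frac{\sigma}{a\sqrt{2\pi}}$ is genuinely absorbed by the constant $1.25$; the cited reference spends essentially its whole proof on this case analysis. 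As a plan, yours is the right one and would close; as written, the crux is still owed, and you should either carry out that computation explicitly or cite it --- which is what the paper itself does.
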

This will be the main workhorse that we use here.

\subsection{Stochastic Gradient Langevin Dynamics}

SGLD iteratively update the parameters to by running a perturbed version of the minibatch stochastic gradient descent on the negative log-posterior objective function
$$
 - \sum_{i=1}^{N}\log p(\vct x_i|\vct \theta) -\log \pi(\vct \theta)=: \sum_{i=1}^{N}\ell(\vct x_i;\vct \theta) + r(\vct \theta)
$$
where $\ell(\vct x_i;\vct \theta)$ and $r(\vct \theta)$ are loss-function and regularizer under the empirical risk minimization.

If one were to run stochastic gradient descent or any other optimization tools on this, one would eventually a deterministic maximum a posteriori estimator. SGLD avoids this by adding noise in every iteration.
At iteration $t$ SGLD first samples uniform randomly $\tau$ data points $\{\vct x_{t_1},...,\vct x_{t_2}\}$ and then updates the parameter using
\begin{equation}\label{eq:SGLD}
\vct \theta_{t+1} = \vct \theta_t - \eta_t\left(\nabla r(\vct \theta) +\frac{N}{\tau}\sum_{i=1}^\tau\nabla \ell(\vct x_{ti}|\vct \theta)\right) + \vct z_t,
\end{equation}
where $\vct z_t\sim \cN(0,\eta_t)$ and $\tau$ is the mini-batch size.

For the ordinary stochastic gradient descent to converge in expectation, the stepsize $\eta_t$ can be chosen as anything that $\sum_{i=1}^{\infty} \eta_t= \infty$ and $\sum_{i=1}^{\infty} \eta_t^2 <\infty$ \citep{robbins1951stochastic}. Typically, one can chooses stepsize $\eta_t = a(b+t)^{-\gamma}$ with $\gamma\in(0.5,1]$. In fact, it is shown that for general convex functions and $\mu$-strongly convex functions $\frac{1}{\sqrt{t}}$ and $\frac{1}{\mu t}$ can be used to obtain the minimax optimal $O(1/\sqrt{t})$ and $O(1/t)$ rate of convergence. These results substantiate the first phase of SGLD: a convergent algorithm to the optimal solution. Once it gets closer, however, it transforms into a posterior sampler. According to \citet{welling2011bayesian} and later formally proven in \citet{sato2014approximation}, if we choose $\eta_t \rightarrow 0 $, the random iterates $\vct \theta_t$ of SGLD converges in distribution to the $p(\vct \theta|X)$. The idea is that as the stepsize gets smaller, the stochastic error from the true gradient due to the random sampling of the minibatch converges to $0$ faster than the injected Gaussian noise.

In addition, if we use some fixed stepsize lower bound, such that $\eta_t = \max\{1/(t+1),\eta_0\}$ (to alleviate the slow mixing problem of SGLD), the results correspond to a discretization approximation of a stochastic differential equation (Fokker-Planck equation), which obeys the following theorem due to \citet{sato2014approximation} (simplified and translated to our notation).
\begin{theorem}[Weak convergence \citep{sato2014approximation}]
Assume $f(\vct \theta|X)$ is differentiable, $\nabla f(\vct \theta |X)$ is gradient Lipschitz and bounded \footnote{We use boundedness to make the presentation simpler. Boundedness trivially implies the linear growth condition in \citet[Assumption 2]{sato2014approximation}.}. Then
$$\left|\E_{\vct \theta\sim p(\vct \theta|X)}[h(\vct \theta)] - \E_{\vct \theta \sim SGLD}[h(\vct \theta(t))]\right| = O(\eta_t),$$
for any continuous and polynomial growth function $h$.
\end{theorem}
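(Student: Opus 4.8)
The plan is to recognize SGLD as the Euler--Maruyama discretization of a continuous-time Langevin diffusion whose invariant measure is exactly the target posterior, and then to bound the bias introduced by discretization. Write the negative log-posterior as $f(\vct\theta|X)$, so that $p(\vct\theta|X)\propto e^{-f(\vct\theta|X)}$, and consider the overdamped Langevin SDE $d\vct\theta = -\nabla f(\vct\theta|X)\,ds + \sqrt{2}\,dW_s$. A direct computation with the stationary Fokker--Planck equation shows that $e^{-f}$ is its invariant density, so the continuous dynamics has the posterior as its unique stationary law. The SGLD recursion \eqref{eq:SGLD} is precisely the Euler step for this SDE with time increment $\eta_t$ (with the injected noise scaled to match the diffusion coefficient), except that the exact gradient $\nabla f$ is replaced by the unbiased minibatch estimate $\hat g$. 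Introduce the infinitesimal generator $\mathcal{L}g = -\nabla f\cdot\nabla g + \Delta g$; the key object is the semigroup $P_s h(\vct\theta) = \E[h(\vct\theta(s))\mid \vct\theta(0)=\vct\theta]$, which solves the backward Kolmogorov equation $\partial_s P_s h = \mathcal{L} P_s h$.

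The core of the argument is a one-step (local) weak-error estimate comparing one SGLD update with the exact semigroup over the same time $\eta_t$. Taylor-expanding $h$ around $\vct\theta_t$ and taking expectations over both the injected Gaussian noise $\vct z_t$ and the minibatch randomness, the drift contributes $-\eta_t \nabla f\cdot\nabla h$ and the Gaussian noise contributes $\eta_t\Delta h$, together matching the $\eta_t\mathcal{L}h$ term of the continuous semigroup, with all remaining terms of order $\eta_t^2$. The minibatch stochasticity enters only through $\eta_t^2\,\mathrm{Var}(\hat g)$, because the stochastic-gradient error is multiplied by $\eta_t$ while the injected noise scales as $\sqrt{\eta_t}$; this is exactly the heuristic quoted in the text that the sampling error ``vanishes faster'' than the injected noise, and it shows the minibatch does not spoil the local rate. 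Hence $|\E[h(\vct\theta_{t+1})\mid\vct\theta_t] - P_{\eta_t}h(\vct\theta_t)| = O(\eta_t^2)$, where the constant depends on bounds for the derivatives of $h$ and of $f$.

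It then remains to accumulate these local errors and pass to the stationary regime. Telescoping the discrete semigroup against the continuous one over a fixed horizon $T=k\eta_t$ and using the contraction of $P_s$ shows that the $k = T/\eta_t$ local errors of size $O(\eta_t^2)$ sum to a global error of $O(\eta_t)$ that does not blow up. Combining this with geometric ergodicity of the Langevin diffusion, $|P_s h(\vct\theta) - \E_{p(\vct\theta|X)}[h]| \le C e^{-\lambda s}$, which follows from a Lyapunov argument under the bounded, gradient-Lipschitz assumption on $\nabla f$, and letting $s$ be large yields the claimed $|\E_{p(\vct\theta|X)}[h] - \E_{\mathrm{SGLD}}[h(\vct\theta(t))]| = O(\eta_t)$. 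The boundedness and Lipschitz conditions on $\nabla f$ are used to guarantee uniform-in-time moment bounds on the iterates (so that expectations of the polynomial-growth $h$ are finite) and to supply the spectral gap driving the exponential mixing.

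The main obstacle is the global step: controlling the accumulation of local errors \emph{uniformly in time}, so that multiplying the $O(\eta_t^2)$ per-step error by the $O(1/\eta_t)$ number of steps does not leave a residual that grows with the horizon. This requires uniform bounds on the spatial derivatives of $P_s h$ (regularity of the backward Kolmogorov solution) together with the contraction of the semigroup, which is where the geometric ergodicity and the moment estimates for a polynomially growing $h$ do the real work; the one-step Taylor expansion and the minibatch bookkeeping, by contrast, are routine.
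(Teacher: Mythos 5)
The first thing to note is that the paper does not actually prove this statement: it is imported (``simplified and translated to our notation'') from \citet{sato2014approximation}, with the attached footnote doing nothing more than checking that boundedness of the gradient implies that paper's linear-growth assumption. So there is no in-paper proof to compare against, and your proposal should be judged as a reconstruction of the cited result. Your route --- view SGLD as an Euler--Maruyama discretization of the Langevin SDE, establish a one-step weak error of $O(\eta_t^2)$ by matching the discrete update against the generator $\mathcal{L}$, note that the unbiased minibatch noise enters only at order $\eta_t^2$, then telescope local errors against the semigroup and invoke ergodicity of the diffusion to get a stationary bias of $O(\eta_t)$ --- is the standard Talay--Tubaro-style weak-error argument, and it is the same family of analysis as \citet{sato2014approximation}, who work with the Fokker--Planck/Kolmogorov equations directly (modulo a factor-of-two bookkeeping between the paper's injected noise $\vct z_t \sim \cN(0,\eta_t)$ and the $\sqrt{2}\,dW_s$ diffusion you wrote down).

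There is, however, a genuine gap at exactly the step you yourself flag as carrying ``the real work.'' You assert that geometric ergodicity, $|P_s h(\vct\theta) - \E_{\vct\theta\sim p(\vct\theta|X)}[h]| \le C e^{-\lambda s}$, ``follows from a Lyapunov argument under the bounded, gradient-Lipschitz assumption on $\nabla f$.'' It does not. Boundedness of $\nabla f$ is not a confinement condition: it is compatible with heavy-tailed targets --- e.g., $f(\theta)=\log(1+\theta^2)$ in one dimension has a bounded Lipschitz gradient but a Cauchy-like invariant density --- and for such targets the Langevin diffusion has no spectral gap and mixes only polynomially. In that regime the exponential-in-$s$ decay, the uniform-in-time moment bounds for polynomially growing $h$, and the uniform bounds on the spatial derivatives of $P_s h$ that your telescoping step requires all fail, so the $O(\eta_t^2)\times O(1/\eta_t)$ accumulation cannot be closed uniformly in the horizon. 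To make your argument work one must add a dissipativity hypothesis (e.g., $\langle \nabla f(\vct\theta),\vct\theta\rangle \ge c\|\vct\theta\|^2 - b$ outside a compact set, or convexity at infinity); this is precisely the kind of structural condition that \citet{sato2014approximation} bury inside their regularity assumptions on the Kolmogorov/Fokker--Planck solutions, and that the theorem as restated in this paper quietly inherits through the citation. As written, your sketch establishes the local estimate but not the global one: either state the dissipativity assumption explicitly, or supply the derivative and moment estimates by some other means.
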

This theorem implies that one can approximate the posterior mean (and other estimators) using SGLD. Finite sample properties of SGLD is also studied in \citep{vollmer2015non}.

Now we will show that with a minor modification to just the ``burn-in'' phase of SGLD, we will be able to make it differentially private (see Algorithm~\ref{alg:DP-SGLD}).

%


\begin{algorithm}[b]                     
\caption{Differentially Private Stochastic Gradient Langevin Dynamics (DP-SGLD)}          
\label{alg:DP-SGLD}                           
\begin{algorithmic}                    
    \REQUIRE Data $X$ of size $N$, Size of minibatch $\tau$, number of data passes $T$, privacy parameter $\epsilon,\delta$, Lipschitz constant $L$ and  initial $\vct \theta_1$. Set $t=1$.
	\FOR{$t=1:\lfloor NT/\tau\rfloor$}
	   \STATE{1.} Random sample a minibatch $S\subset [N]$ of size $\tau$.
 	   \STATE{2.} Sample each coordinate of $\vct z_t$ iid from
 	   $ \cN\left( 0, \frac{128NT L^2}{ \tau\epsilon^2}\log\Big(\frac{2.5NT}{\tau\delta}\Big)\log(2/\delta)  \eta_t^2 \vee  \eta_t\right).$
 	   \STATE{3.} Update $\vct \theta_{t+1}\leftarrow \vct \theta_t - \eta_t\left(\nabla r(\vct \theta) +\frac{N}{\tau}\sum_{i\in S}\nabla \ell(\vct x_{i}|\vct \theta)\right) + \vct z_t,$
        \STATE{4.} Return $\vct \theta_{t+1}$ as a posterior sample (after a pre-defined burn-in period).
 	   \STATE{5.} Increment $t\leftarrow t+1.$
    \ENDFOR
\end{algorithmic}
\end{algorithm}
\begin{algorithm}[b]                   
\caption{Hybrid Posterior Sampling Algorithm}          
\label{alg:hybrid}                           
\begin{algorithmic}                    
    \REQUIRE Data $X$ of size $N$, $\log$-likelihood function $\ell(\cdot|\theta)$ with Lipschitz constant $L$ in the first argument, assume $\sup_{\vct x\in\cX}\|\vct x\|$, a prior $\pi$. Privacy requirement $\epsilon$.
    \STATE{1. } Run OPS estimator: Algorithm~\ref{alg:OPS} with $\epsilon/2$. Collect sample point $\theta_0$
    \STATE{2. } Run DP-SGLD (Algorithm~\ref{alg:DP-SGLD}) or other Stochatic Gradient Monte Carlo algorithms and collect samples.
 	\OUTPUT: Return all samples.
\end{algorithmic}
\end{algorithm}

\begin{theorem}[Differentially private Minibatch SGLD]\label{thm:private_SGLD}
Assume initial $\vct \theta_1$ is chosen independent of the data, also assume $\ell(\vct x|\vct \theta)$ is $L$-smooth in $\|\cdot \|_2$ for any $\vct x\in \cX$ and $\vct \theta\in \Theta$. In addition, let $\epsilon,\delta, \tau, T$ be chosen such that $T\geq \frac{\epsilon^2 N}{32\tau \log(2/\delta)}$.
Then Algorithm~\ref{alg:DP-SGLD} preserves $(\epsilon,\delta)$-differential privacy.
\end{theorem}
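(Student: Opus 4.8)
The plan is to read each SGLD iteration as a \emph{subsampled Gaussian mechanism} acting on the minibatch gradient, to argue privacy of one step via Theorem~\ref{lemma:gauss_mech} together with Lemma~\ref{lemma:subsampling}, and then to glue the $k := \lfloor NT/\tau\rfloor$ steps with the advanced adaptive composition of Lemma~\ref{lemma:adv_composition}. Because the noise level in Algorithm~\ref{alg:DP-SGLD} is precisely what such a calibration produces, the argument is really a backward computation: fix the target $(\epsilon,\delta)$, split the budget across the $k$ rounds, and solve for the per-step variance $\sigma_t^2$.

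Concretely, I would first fix a round $t$ and condition on the current iterate $\vct\theta_t$, which is legitimate since Lemma~\ref{lemma:adv_composition} is stated for adaptive composition. The only data-dependent term in the update is $-\eta_t\frac{N}{\tau}\sum_{i\in S}\nabla\ell(\vct x_i|\vct\theta_t)$; changing one data point inside the minibatch moves it by at most $\Delta_2=\frac{2\eta_t NL}{\tau}$ using the bounded per-example gradient $\|\nabla_{\vct\theta}\ell\|_2\le L$ supplied by the smoothness hypothesis. Adding $\vct z_t\sim\cN(0,\sigma_t^2 I)$ is thus a Gaussian mechanism, so by Theorem~\ref{lemma:gauss_mech} one step is $(\epsilon_0,\delta_0)$-DP \emph{on the minibatch} as soon as $\sigma_t\ge \Delta_2\sqrt{2\log(1.25/\delta_0)}/\epsilon_0$. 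Since the minibatch is a uniform $\gamma N$-subsample with $\gamma=\tau/N$, Lemma~\ref{lemma:subsampling} amplifies this to $(2\gamma\epsilon_0,\delta_0)$-DP \emph{on the full data set}. Finally I compose the $k$ rounds via Lemma~\ref{lemma:adv_composition} and Remark~\ref{rmk:adv_composition}: setting the amplified per-step parameter to $2\gamma\epsilon_0=\tfrac{c}{\sqrt{2k\log(1/\delta')}}$ with $c=\epsilon/2$ gives total $\epsilon'\le 2c=\epsilon$. Solving this chain for $\epsilon_0$ and substituting back into the Gaussian-mechanism threshold, with the budget split $\delta=k\delta_0+\delta'$ taken as $\delta'=\delta/2$ and $\delta_0=\delta/(2k)=\delta\tau/(2NT)$ (so that $\log(1.25/\delta_0)=\log(2.5NT/(\tau\delta))$ and $\log(1/\delta')=\log(2/\delta)$), reproduces the variance schedule prescribed in step~2 of Algorithm~\ref{alg:DP-SGLD}.

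Two bookkeeping points remain. First, the prescribed variance is the maximum of this privacy-driven quantity and the native SGLD noise $\eta_t$; since inflating a Gaussian mechanism's standard deviation beyond its threshold only strengthens the guarantee, taking the maximum never harms privacy, and both the returned subset of iterates and the data-independent $\vct\theta_1$ are harmless by postprocessing (Lemma~\ref{lemma:postprocessing}). Second, the hypothesis $T\ge \frac{\epsilon^2 N}{32\tau\log(2/\delta)}$ is exactly the inequality that forces the per-step base parameter $\epsilon_0\le 1$; this legitimizes both the Gaussian mechanism (valid for $\epsilon_0\in(0,1)$) and the subsampling lemma (whose input must satisfy $\epsilon_0<1$), and it simultaneously secures $c=\epsilon/2<\sqrt{\log(2/\delta)}$ so that Remark~\ref{rmk:adv_composition} applies.

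I expect the main obstacle to be the \emph{order of operations} in the amplification-then-composition argument rather than any single inequality: one must verify that subsampling amplification may be applied per round and that it is the amplified per-step guarantees that feed into the adaptive composition, all while keeping every regime condition ($\epsilon_0<1$, $c<\sqrt{\log(1/\delta')}$, and $e^{\epsilon_0}-1\le 2\epsilon_0$) valid for the chosen parameters. The residual difficulty is purely arithmetic, namely chasing the constants and logarithmic factors through the Gaussian-mechanism, subsampling, and advanced-composition bounds so that they collapse into the exact expression in Algorithm~\ref{alg:DP-SGLD}.
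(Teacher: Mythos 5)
Your proposal is correct and follows essentially the same route as the paper's own proof: per-step Gaussian mechanism on the minibatch gradient sum with sensitivity $2L$ (equivalently $2\eta_t NL/\tau$ after scaling), amplification via Lemma~\ref{lemma:subsampling} with $\gamma=\tau/N$, advanced composition over $k=\lfloor NT/\tau\rfloor$ rounds via Remark~\ref{rmk:adv_composition} with $c=\epsilon/2$, the budget split $\delta_0=\tau\delta/(2NT)$, $\delta'=\delta/2$, and the observation that taking the maximum with the native SGLD variance $\eta_t$ only helps. The single slip is your claim that the hypothesis on $T$ also secures $c=\epsilon/2<\sqrt{\log(2/\delta)}$ --- it only forces the pre-amplification parameter $\epsilon_0\le 1$ (needed for the Gaussian mechanism and the subsampling lemma), while the condition required by Remark~\ref{rmk:adv_composition} is independent of $T$; the paper's proof leaves that same condition unverified, so this does not distinguish your argument from theirs.
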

\begin{proof}
In every iteration, the only data access is $\sum_{i\in S}\nabla \ell(\vct x_{i}|\vct \theta)$  and by the $L$-Lipschitz condition, the sensitivity of $\sum_{i\in S}\nabla \ell(\vct x_{i}|\vct \theta)$ is at most $2L$. Get the essential noise that is added to $\sum_{i\in S}\nabla \ell(\vct x_{i}|\vct \theta)$ by removing the $\frac{N^2\eta_t^2}{\tau^2}$ factor from the variance $\sigma^2$ in the algorithm, and Gaussian mechanism, ensures the privacy loss to be smaller than $\frac{\epsilon\sqrt{N}}{\sqrt{32\tau T\log (2/\delta)}}$ with probability $>1-\frac{\tau\delta}{2NT}$.

Using the same technique in \citet{bassily2014private}, we can further exploit the fact that the subset $S$ that we use to compute the stochastic gradient is chosen uniformly randomly. By Lemma~\ref{lemma:subsampling}, the privacy loss for this iteration is in fact
$$\frac{\epsilon\sqrt{N}}{\sqrt{32\tau T\log (2/\delta)}} \cdot \frac{2\tau}{N} = \frac{\epsilon/2}{\sqrt{2(NT/\tau) \log(2/\delta)}}.$$
Verify that we can indeed do that as $\frac{\epsilon\sqrt{N}}{\sqrt{32\tau T\log (2/\delta)}} <1$ from the assumption on $T$. Note that to get $T$ data passes with minibatches of size $\tau$, we need to go through at most $\lfloor \frac{NT}{\tau}\rfloor\leq \frac{NT}{\tau}$ iterations. Apply the advanced composition theorem (Remark~\ref{rmk:adv_composition}), we get an upper bound of the total privacy loss $\epsilon$ and  failure probability $\delta = \frac{\delta}{2} + \frac{\tau\delta}{2NT} \cdot \frac{NT}{\tau}$ accordingly.

The proof is complete by noting that choosing a larger noise level when $\eta_t$ is bigger can only reduces the privacy loss under the same failure probability.
\end{proof}

\paragraph{$\alpha$-Phase transition.} For any $\alpha\in (0,1)$, if we choose $\eta_t = \frac{\alpha \epsilon^2}{128L^2\log(2.5NT/(\tau\delta))\log(2/\delta) t}$, then whenever $t>\alpha NT/\tau$, then we are essentially running SGLD for the last $(1-\alpha)NT/\tau$ iterations, and we can collect approximate posterior samples from there. 

\paragraph{Small constant $\eta_0$.}
Instead of making $\eta_t$ to converge to $0$ as $t$ increases, we may alternatively use constant $\eta_0$ after $t$ is larger than a threshold. This is a suggested heuristic in \citet{welling2011bayesian} and is inline with the analysis in \citet{sato2014approximation} and \citet{vollmer2015non}.

\paragraph{Choice of $T$ and $\tau$}
By \citet{bassily2014private}, it takes at least $N$ data passes to converge in expectation to a point near the minimizer, so taking $T=2N$ is a good choice.
 The variance of both random components in our stochastic gradient is smaller when we use larger $\tau$. Smaller variances would improve the convergence of the stochastic gradient methods and make the SGLD a better approximation to the full Langevin Dynamics. The trade-off is that  when $\tau$ is too large, we will use up the allowable $T$ datapasses with just $O(T)$ iterations and the number of posterior samples we collect from the algorithm will be small.


\paragraph{Overcoming the large-noise in the ``Burn-in'' phase}
When the stepsize $\eta_t$ is not small enough initially, we need to inject significantly more noise than what SGLD would have to ensure privacy. We can overcome this problem by initializing the SGLD sampler with a valid output of the \OPS estimator, modified according to the exponential mechanism so that the privacy loss is calibrated to $\epsilon/2$. As the initial point is already in the high probability region of the posterior distribution, we no longer need to ``Burn-in'' the Monte Carlo sampler so we can simply choose a sufficiently small constant stepsize so that it remains a valid SGLD. This algorithm is summarized in Algorithm~\ref{alg:hybrid}.

\paragraph{Comparing to \OPS}
The privacy claim of DP-SGLD is very different from \OPS. It does not require sampling to be nearly correct to ensure differential privacy. In fact, DP-SGLD privately releases the entire sequence of parameter updates, thus ensures differential privacy even if the internal state of the algorithm gets hacked. However, the quality of the samples is usually worse than \OPS due to the random-walk like behavior. The interesting fact, however, is that if we run SGLD indefinitely without worrying about the stronger notion of internal privacy, it leads to a valid posterior sample. We can potentially modify the posterior distribution to sample from into the ``scaled'' version so as to balancing the two ways of getting privacy.


\subsection{Hamiltonian Dynamics, Fisher Scoring and Nose-Hoover Thermostat}
One of the practical drawback of SGLD is its random walk-like behavior which slows down the mixing significantly. In this section, we describe three extensions of SGLD that attempts to resolve the issue by either using auxiliary variables to counter the noise in the stochastic gradient\citep{Chen2014SGHMC,ding2014bayesian}, or to exploit second order information so as to use Newton-like updates with large stepsize \citep{ahn2012bayesian}.

We note that in all these methods, stochastic gradients are the only form of data access, therefore similar results like what we described for SGLD follow nicely. We briefly describe each method and how to choose their parameters for differential privacy.

\paragraph{Stochastic Gradient Hamiltonian Monte Carlo.}
 According to \citet{neal2011mcmc}, Langevin Dynamics is a special limiting case of Hamiltonian Dynamics, where one can simply ignore the ``momentum'' auxiliary variable.  In its more general form, Hamiltonian Monte Carlo (HMC) is able to generate proposals from distant states and hence enabling more rapid exploration of the state space.
\citet{Chen2014SGHMC} extends the full ``leap-frog'' method for HMC in \citet{neal2011mcmc} to work with stochastic gradient and add a ``friction'' term in the dynamics to ``de-bias'' the noise in the stochastic gradient.
\begin{equation}\label{eq:SGHMC}
  \left\{
    \begin{array}{ll}
      \vct \theta_t = \vct \theta_{t-1} + h_t  \vct r_{t-1}\\
      \vct p_t = \vct p_{t-1} - h_t \widehat{\nabla} -\eta_t A \vct p_{t-1} + \cN(0, 2(A-\widehat{B}) h_t).
    \end{array}
  \right.
\end{equation}
where $\widehat{B}$ is a guessed covariance of the stochastic gradient (the authors recommend  restricting $\hat{B}$ to a single number or a diagonal matrix) and
$A$ can be arbitrarily chosen as long as $A\succ \widehat{B}$. If the stochastic gradient $\widehat{\nabla}\sim \cN(\nabla, B)$ for some $B$ and $\widehat{B}=B$, then this dynamics is simulating a dynamic system that yields the correct distribution. Note that even if the normal assumption holds and we somehow set $\widehat{B}=B$, we still requires $h_t$ to go to $0$ to sample from the actual posterior distribution, and as $h_t$ converges to $0$ the additional noise we artificially inject dominates and we get privacy for free. All we need to do is to set $A$, $\widehat{B}$ and $h_t$ so that
$2(A-\widehat{B}) /h_t\succ \frac{128NT L^2}{ \tau\epsilon^2}\log\Big(\frac{2.5NT}{\tau\delta}\Big)\log(2/\delta)  I_n$.
Note that as $h_t\rightarrow 0$ this quickly becomes true.

\paragraph{Stochastic Gradient Nos\'{e}-Hoover Thermostat}
As we discussed, the key issue about SGHMC is still in choosing $\widehat{B}$. Unless $\widehat{B}$ is chosen exactly as the covariance of true stochastic gradient, it does not sample from the correct distribution even as $h_t\rightarrow 0$ unless we trivially set $\hat{B}=0$. The Stochastic Gradient Nos\'{e}-Hoover Thermostat (SGNHT) overcomes the issue by introducing an additional auxiliary variable $\xi$, which serves as a thermostat to absorb the unknown noise in the stochastic gradient. The update equations of SGNHT are given below
\begin{equation}\label{eq:SGNHT}
  \left\{
    \begin{array}{ll}
      \vct p_t = \vct p_{t-1} - \xi_{t-1}\vct p_{t-1}h_t - \widehat{\nabla}h_t + \cN(0, 2Ah_t);\\
      \vct \theta_t = \vct \theta_{t-1} + h_t \vct p_{t-1};\\
      \xi_t = \xi_{t-1} + (\frac{1}{n} \vct p_t^T\vct p_t -1) h_t.
    \end{array}
  \right.
\end{equation}
Similar to the case in SGHMC, appropriately selected discretization parameter $h_t$ and the friction term $A$ will imply differential privacy.

\citet{Chen2014SGHMC,ding2014bayesian} both described a reformulation that can be interpret as SGD with momentum. This is by setting parameters  $\eta=h^2, a=hA, \hat{b}=h \widehat B$ for SGHMC:
\begin{equation}\label{eq:SGHMC_reform}
  \left\{
    \begin{array}{ll}
      \vct \theta_t = \vct \theta_{t-1} + \vct v_{t-1}\\
      \vct v_t = \vct v_{t-1} - \eta_t \widehat{\nabla} - a\vct v + \cN(0, 2(a-\widehat{b}) \eta_tI);
    \end{array}
  \right.
\end{equation}
and $\vct v = \vct p h, \eta_t=h_t^2,\alpha=\xi h$ and $a = Ah$ for SGNHT:
\begin{equation}\label{eq:SGNHT_reform}
  \left\{
    \begin{array}{ll}
      \vct v_t = \vct v_{t-1} - \alpha_{t-1}\vct v_{t-1} - \widehat{\nabla}(\vct \theta_{t-s})\eta_t + \cN(0, 2 a\eta_t I);\\
      \vct \theta_t = \vct \theta_{t-1} + \vct u_{t-1};\\
      \alpha_t = \alpha_{t-1} + (\frac{1}{n} \vct v_t^T\vct v_t -\eta_t).
    \end{array}
  \right.
\end{equation}
where $1-a$ is the momentum parameter and $\eta$ is the learning rate in the SGD with momentum.
Again note that to obtain privacy, we need $\frac{2a}{\eta_t} \geq \frac{128NT L^2}{\tau\epsilon^2}\log(\frac{2NT}{\tau\delta})\log(1/\delta)$.

Note that as $\eta_t$ gets smaller, we have the flexibility of choosing $a$ and $\eta_t$ within a reasonable range.

\paragraph{ Stochastic Gradient Fisher Scoring}
Another extension of SGLD is Stochastic Gradient Fisher Scoring (SGFS), where \citet{ahn2012bayesian} proposes to adaptively interpolate between a preconditioned SGLD (see preconditioning \citep{girolami2011riemann}) and a Markov Chain that samples from a normal approximation of the posterior distribution. For parametric problem where Bernstein-von Mises theorem holds, this may be a good idea. The heuristic used in the SGFS is that the covariance matrix of $\vct \theta|X$, which is also the inverse Fisher information $I_N^{-1}$ is estimated on the fly. The key features of SGFS is that one can use the stepsize to trade off speed and accuracy, when the stepsize is large, it mixes rapidly to the normal approximation, as the stepsize gets smaller the stationary distribution converges to the true posterior. Further details of SGFS and ideas to privatize it is described in the appendix.

\subsection{Discussions and caveats.}
So far, we have proposed a differentially private Bayesian learning algorithm that is memory efficient, statistically near optimal for a large class of problems, and we can release many intermediate iterates to construct error bars. Given that differential privacy is usually very restrictive, some of these results may appear too good to be true. This is a reasonable suspicion due to the following caveats.

\paragraph{Small $\eta$ helps both privacy and accuracy.} It is true that as $\eta$ goes to $0$, the stationary distribution that these method samples from gets closer to the target distribution. On the other hand, since the variance of the noise we need to add for privacy scales in $O(\eta^2)$ and that for posterior sampling scales like $O(\eta)$, privacy and accuracy benefits from the same underlying principle. The caveat is that we also have a budget on how many samples can we collect. Also the smaller the stepsize $\eta$ is, the slower it mixes, as a result, the samples we collect from the monte carlo sampler is going to be more correlated to each other.

\paragraph{Adaptivity of SGNHT.} While SGNHT is able to adaptively adjust the temperature so that the samples that it produces remain ``unbiased'' in some sense as $\eta\rightarrow 0$. The reality is that if the level of noise is too large, either we adjust the stepsize to be too small to search the space at all, or the underlying stochastic differential equation becomes unstable and quickly diverges. As a result, the adaptivity of SGNHT breaks down if the privacy parameter gets to small.

\paragraph{Computationally efficiency.} For a large problem, it is usually the case that we would like to train with only one pass of data or very small number of data passes. However, due to the condition in Lemma~\ref{lemma:subsampling}, our result does not apply to one pass of data unless $\tau$ is chosen to be as large as $N$. While we can still choose $T$ to be sufficiently large and stop early, but we amount of noise that we add in each iteration will remain the same.

\paragraph{The Curse of Numerical constant.}
The analysis of algorithms often involves larger numerical constants and polylogarithmic terms in the bound. In learning algorithms these are often fine because there are more direct ways to evaluate and compare methods' performances. In differential privacy however, constants do matter. This is because we need to use these bounds (including constants) to decide how much noise or perturbation we need to inject to ensure a certain degree of privacy. These guarantees are often very conservative, but it is intractable to empirically evaluate the actual $\epsilon$ of differential privacy due to its ``worst'' case definition. Our stochastic gradient based differentially private sampler suffers from exactly that. For moderate data size, the product of the constant and logarithmic terms can be as large as a few thousands. That is the reason why it does not perform as well as other methods despite the theoretically being optimal in scaling (the optimality result is due to SGD \citep{bassily2014private}).

%
%
%

\section{Experiments}

Figure~\ref{fig:illus} is a plain illustration of how these stochastic gradient samplers work using a randomly generated linear regression model (note the its posterior distribution will be normal, as the contour illustrates). On the left, it shows how these methods converge like stochastic gradient descent to the basin of convergence. Then it becomes a posterior sampler. The figure on the right shows that the stochastic gradient thermostat is able to produce more accurate/unbiased result and the impact of differential privacy at the level of $\epsilon=10$ becomes negligible.
\begin{figure}
  \centering
  \includegraphics[width=0.45\textwidth]{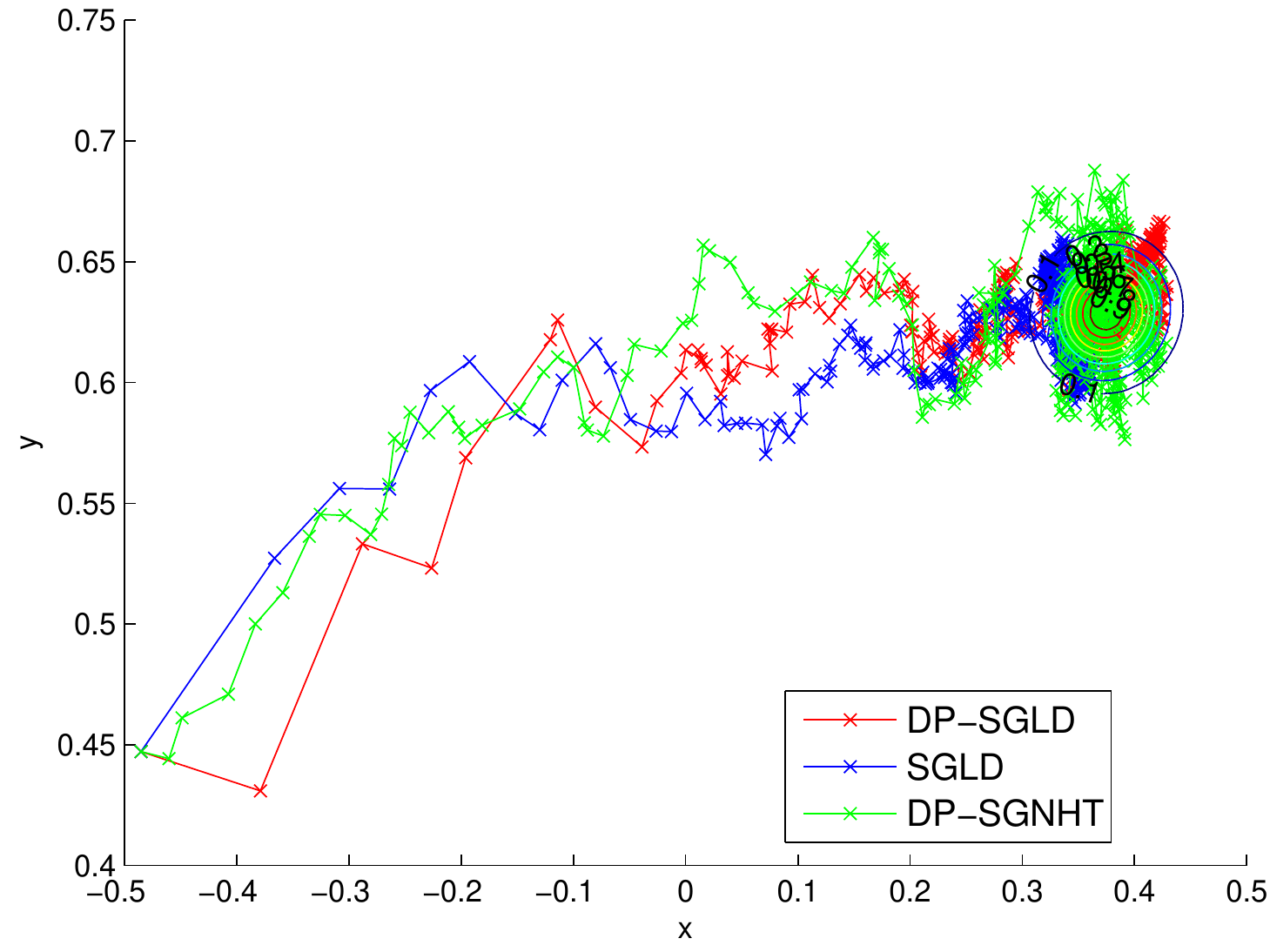}
  \includegraphics[width=0.45\textwidth]{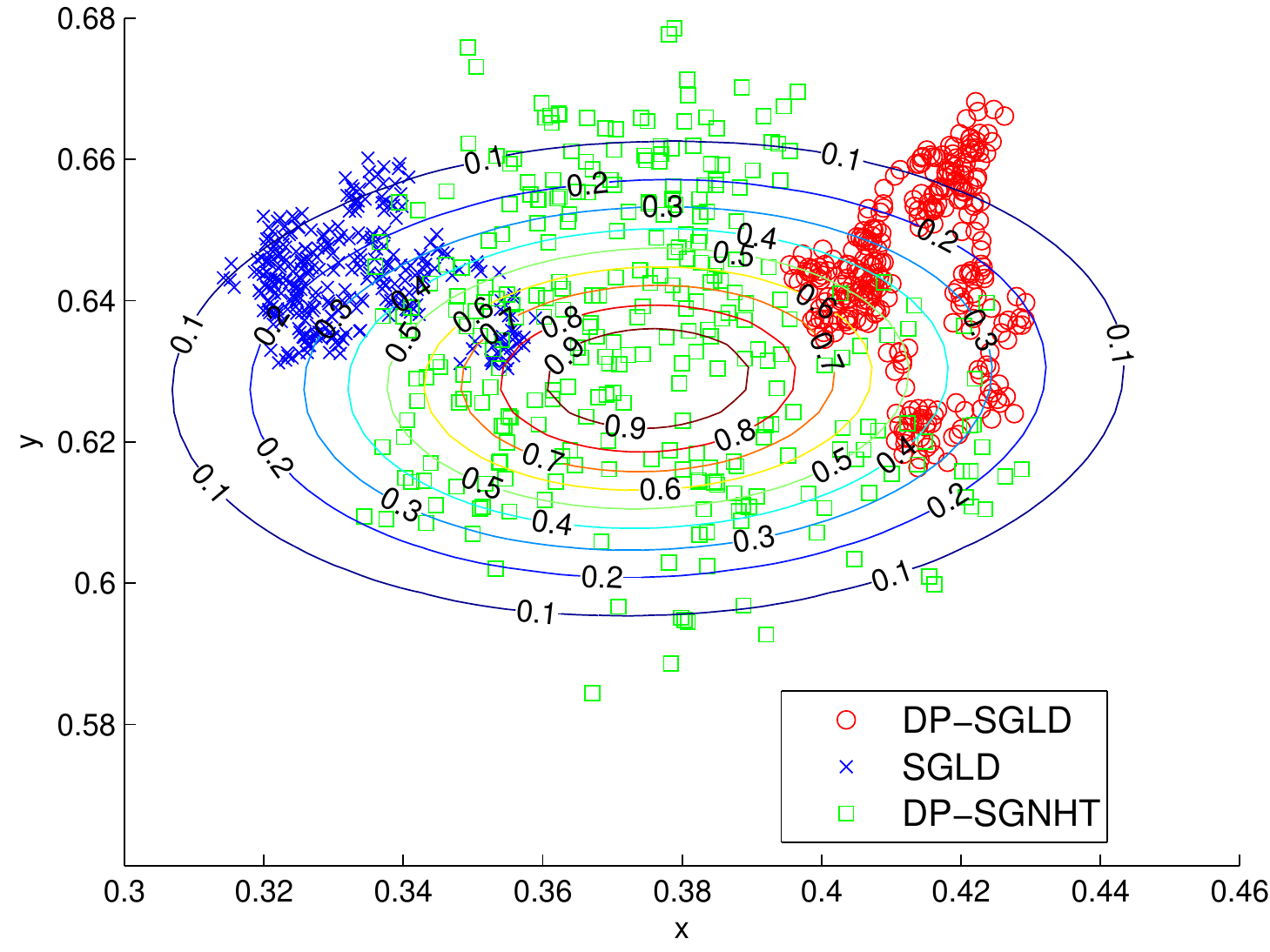}\\
  \caption{Illustration of stochastic gradient langevin dynamics and its private counterpart at $\epsilon=10$.}\label{fig:illus}
\end{figure}

\begin{figure}
  \centering
  \subfigure[Synthetic: classification of two normals.]{
  \includegraphics[width=0.48\textwidth]{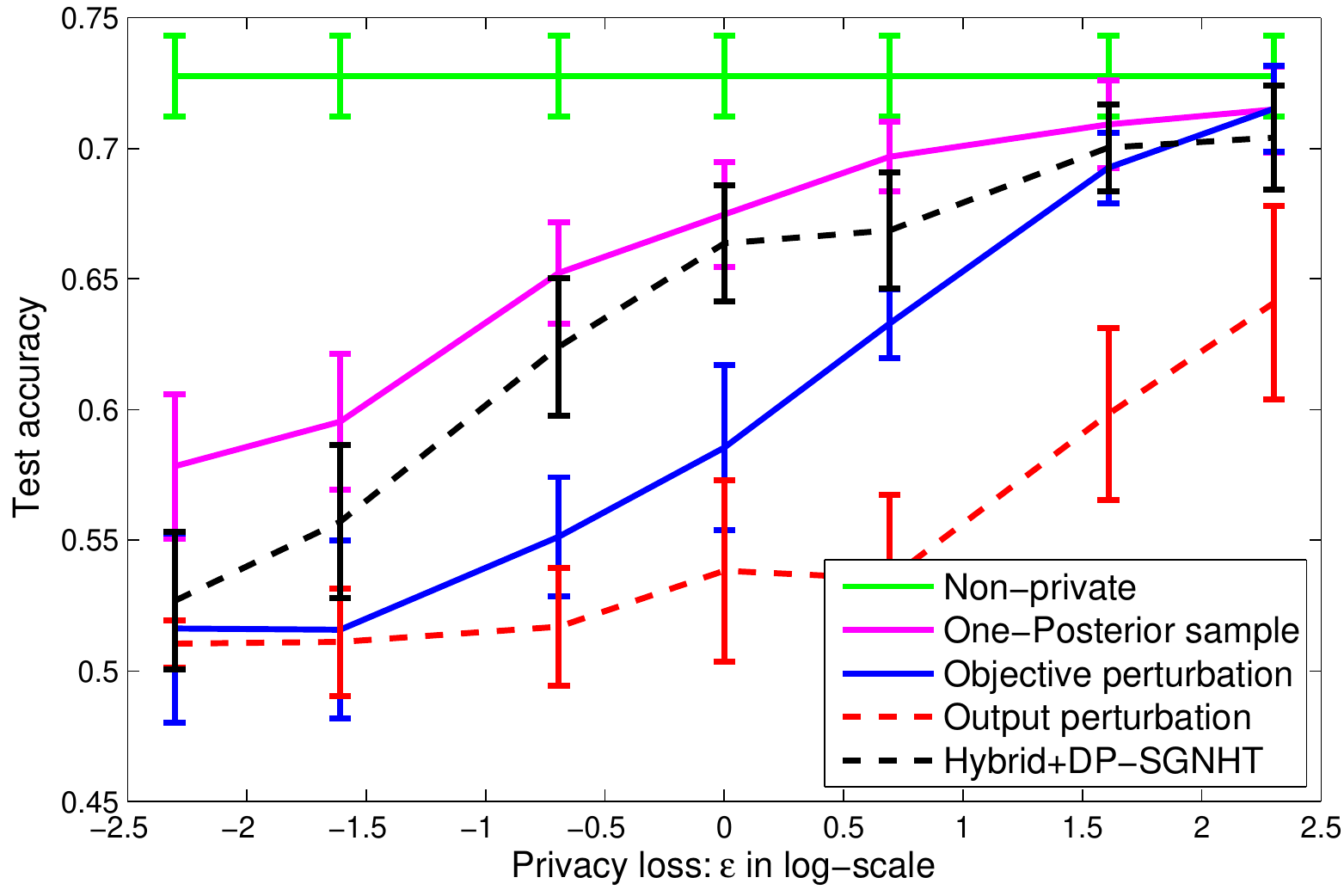}}
  \subfigure[Abalone: 9 features, 4177 data points.]{
     \includegraphics[width=0.48\textwidth]{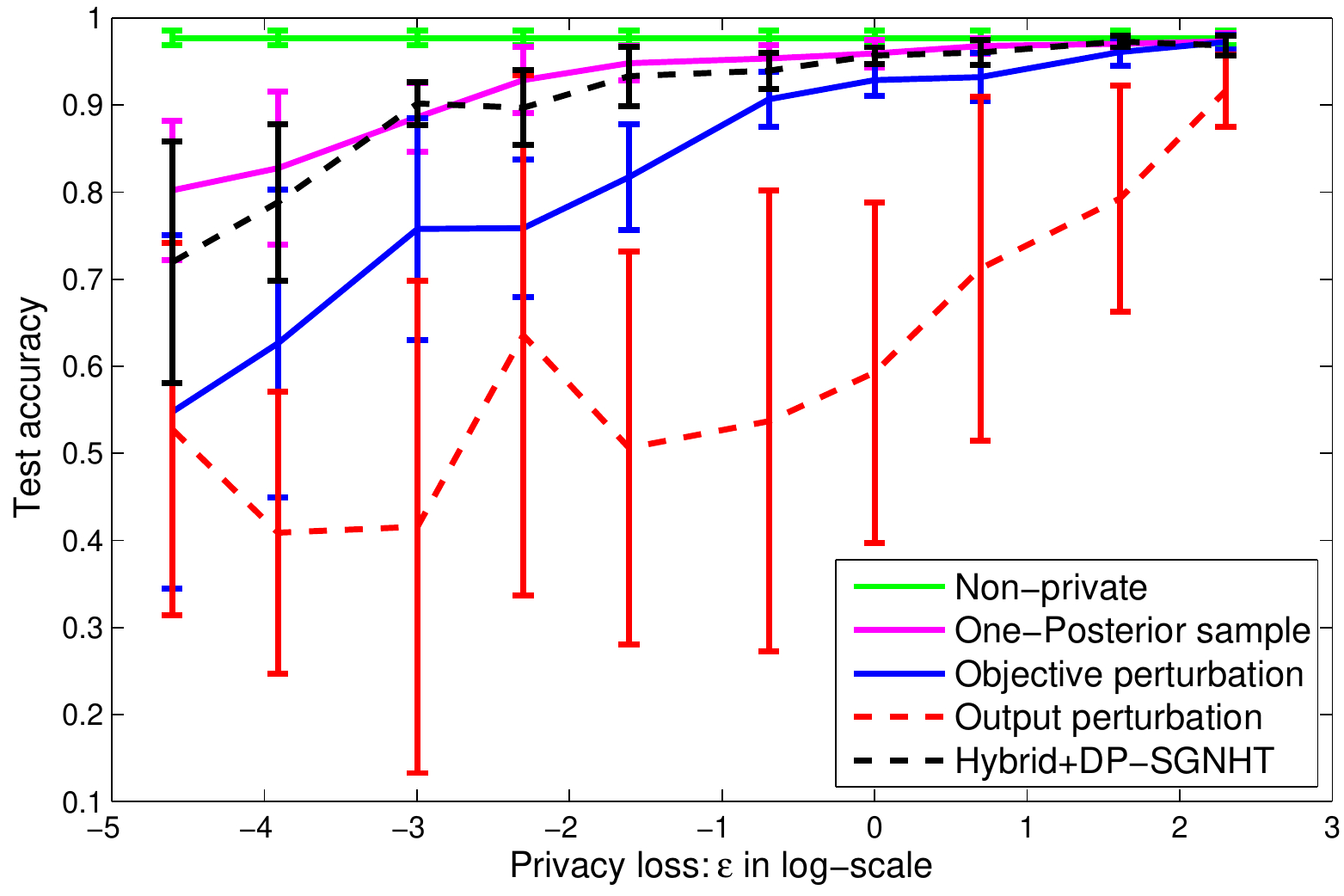}}
     \subfigure[Adult: 109 features, 32561 data points.]{
    \includegraphics[width=0.48\textwidth]{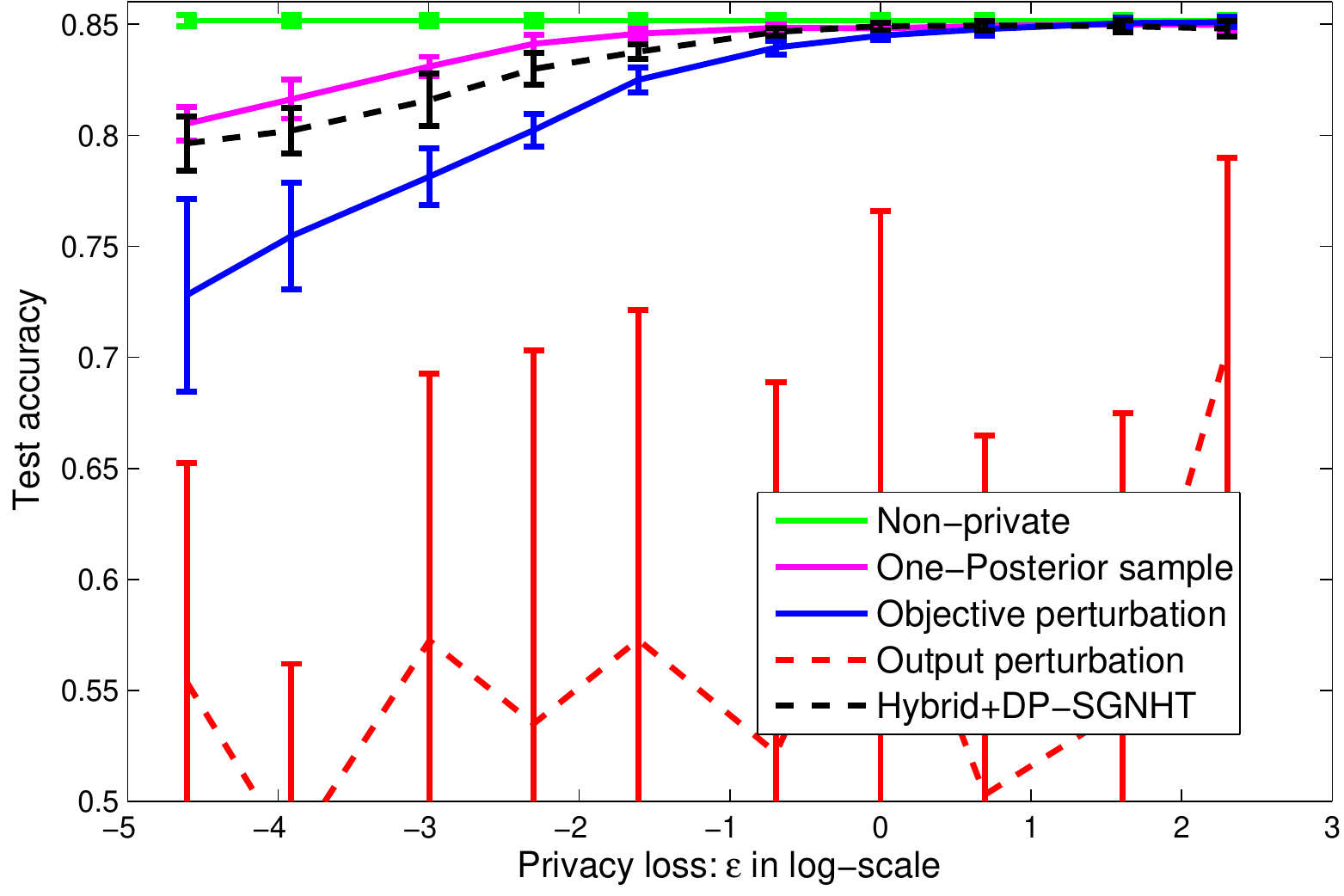}}
  \caption{Comparison of Differential Private methods.}\label{fig:results}
\end{figure}

To evaluate how our proposed methods work in practice, we selected two binary classification datasets: Abalone and Adult, from the first page of UCI Machine Learning Repository and performed privacy constrained logistic regression on them. Specifically, we compared two of our proposed methods, \OPS mechanism and hybrid algorithm against the state-of-the-art empirical risk minimization algorithm \OBP \citep{chaudhuri2011differentially,kifer2012private} under varying level of differential privacy protection. The results are shown in Figure~\ref{fig:results}. As we can see from the figure, in both problems, \OPS significantly improves the classification accuracy over \OBP. The hybrid algorithm also works reasonably well, given that it collected $N$ samples after initializing it from the output of a run of \OPS with privacy parameter $\epsilon/2$. For fairness, we used the $(\epsilon,\delta)$-DP version of the objective perturbation \citep{kifer2012private} and similarly we used Gaussian mechanism (rather than Laplace mechanism) for output perturbation. All optimization based methods are solved using BFGS algorithm to high numerical accuracy. \OPS is implemented using SGNHT and we ran it long enough so that we are confident that it is a valid posterior sample. Minibatch size and number of data passes in the hybrid DP-SGNHT are chosen to be both $\sqrt{N}$.

We note that the plain DP-SGLD and DP-SGNHT without an initialization using \OPS does not work nearly as well. In our experiments, it often performs equally or slightly worse than the output perturbation. This is due to the few caveats (especially ``the curse of numerical constant'') we described earlier.

\section{Related work}\label{sec:relatedwork}
We briefly discuss related work here.
For the first part, we become aware recently that \citet{mir13} and \citet{dimitrakakis2014robust} independently developed the idea of using posterior sampling for differential privacy. \citet[Chapter 5]{mir13} used a probabilistic bound of the log-likelihood to get $(\epsilon,\delta)-DP$ but focused mostly on conjugate priors where the posterior distribution is in closed-form. \citet{dimitrakakis2014robust} used Lipschitz assumption and bounded data points (implies our boundedness assumption) to obtain a generalized notion of differential privacy. Our results are different in that we also studied the statistical and computational properties.
\citet{bassily2014private} used exponential mechanism for empirical risk minimization and the procedure is exactly the same as \OPS.  Our difference is to connect it to Bayesian learning and to provide results on limiting distribution, statistical efficiency and approximate sampling. We are not aware of a similar asymptotic distribution with the exception of \citet{smith2008efficient}, where a different algorithm (the subsample-and-aggregate scheme) is proven to give an estimator that is asymptotically normal and efficient (therefore, stronger than our result) under a different set of assumptions. Specifically, \citet{smith2008efficient}'s method requires boundedness of the parameter space while ours method can work with potentially unbounded space so long as the log-likelihood is bounded.

Related to the general topic,  \citet{kasiviswanathan2014semantics} explicitly modeled the ``semantics'' of differential privacy from a Bayesian point of view, \citet{xiao2012bayesian} developed a set of tools for performing Bayesian inference under differential privacy, e.g., conditional probability and credibility intervals. \citet{williams2010probabilistic} studied a related but completely different problem that uses posterior inference as a meta-post-processing procedure, which aims at ``denoising'' the privately obfuscated data when the private mechanism is known. Integrating \citet{williams2010probabilistic} with our procedure might lead to some further performance boost, but investigating its effect is beyond the scope of the current paper.

For the second part, the idea to privately release stochastic gradient has been well-studied. \citet{song2013stochastic,bassily2014private} explicitly used it for differentially private stochastic gradient descent. And \citet{rajkumar2012differentially} used it for private multi-party training.
Our Theorem~\ref{thm:private_SGLD} is a simple modification of Theorem~2.1 in \citet{bassily2014private}. \citet{bassily2014private} also showed that the differential private SGD using Gaussian mechanism with $\tau=1$ matches the lower bound up to constant and logarithmic, so we are confident that not many algorithms can do significantly better than Algorithm~\ref{alg:DP-SGLD}. Our contribution is to point out the interesting algorithmic structures of SGLD and extensions that preserves differential privacy. The method in \citet{song2013stochastic} requires disjoint minibatches in every data pass, and it requires adding significantly more noise in settings when Lemma~\ref{lemma:subsampling} applies. \citet{song2013stochastic} are however applicable when we are doing only a small number of data passes and for these cases, it gets a much better constant. \citet{rajkumar2012differentially}'s setting is completely different as it injects a fixed amount of noise to the gradient corresponds to each data point exactly once. In this way, it replicates objective perturbation \citep{chaudhuri2011differentially} (assuming the method actually finds the optimal solution).

Objective perturbation is originally proposed in \citet{chaudhuri2011differentially} and the $(\epsilon,\delta)$ version that we refer to first appears in \citet{kifer2012private}. Comparing to our two mechanisms that attempts to sample from the posterior, their privacy guarantee requires the solution to be exact while ours does not. In comparison, \OPS estimator is differentially private allows the distribution it samples from to be approximate, DP-SGLD on the other hand releases all intermediate results and every single iteration is public.

\section{Conclusion and future work}
In this paper, we described two simple but conceptually interesting examples that Bayesian learning can be inherently differentially private. Specifically, we show that getting one sample from the posterior is a special case of exponential mechanism and this sample as an estimator is near-optimal for parametric learning. On the other hand, we illustrate that the algorithmic procedures of stochastic gradient Langevin Dynamics (and variants) that attempts to sample from the posterior also guarantee differential privacy as a byproduct. Preliminary experiments suggests that the One-Posterior-Sample mechanism works very well in practice and it substantially outperforms earlier privacy mechanism in logistic regression. While suffering from a large constant, our second method is also theoretically and practically meaningful in that it provides privacy protection in intermediate steps.

To carry the research forward, we think it is important to identify other cases when the existing randomness can be exploited for privacy. Randomized algorithms such as hashing and sketching, dropout and other randomization used in neural networks might be another thing to look at. More on the application end, we hope to explore the one-posterior sample approach in differentially private movie recommendation. Ultimately, the goal is to make differential privacy more practical to the extent that it can truly solve the real-life privacy problems that motivated its very advent.

%
%

\appendix

\section{Stochastic Gradient Fisher Scoring}

\subsection{Fisher Scoring and Stochastic Gradient Fisher Scoring}

Fisher scoring is simply the Newton's method for solving maximum likelihood estimation problem. The score function $S(\theta)$ is the gradient of the $\log$-likelihood. So intuitively, if we solve the equation $S(\theta)=0$, we can obtain the maximum likelihood estimate. Often this equation is highly non-linear, so we consider the an iterative update for the linearized score function (or a quadratic approximation of the likelihood) by Taylor expand it at the current point $\theta_0$
$$S(\theta) \approx S(\theta_0) + I(\theta_0)(\theta-\theta_0)$$
where $I(\theta_0) = -\sum_{i=1}^n \nabla\nabla^T \ell(Z_i;\theta)$ is the observed Fisher information evaluated at $\theta_0$.

By the fact that $S(\theta^*)=0$, and plug in the above equation, we get
$\theta^* = \theta_0 + I^{-1}(\theta_0) S(\theta_0)$
Note that this is a fix point iteration and it gives us an iterative update rule to search for $\theta^*$ via
$$
\theta_{k+1} = \theta_k + I^{-1}(\theta_k)S(\theta_k).
$$
Recall that $S$ is the gradient of the score function and $I^{-1}$ is the covariance of the score function and (under mild regularity conditions) the Hessian of the log-likelihood. As a result, this is often the same as Newton iterations.

An intuitive idea to avoid passing the entire dataset in every iteration is to simply replacing the gradient (the score function) with stochastic gradient and somehow estimate the Fisher information. Stochastic Gradient Fisher Scoring can be thought of as a Quasi-Newton method.

\subsection{Privacy extension}

By invoking a more advanced version of the Gaussian Mechanism, we will show that similar privacy guarantee can be obtained for a modified version of SGFS (described in Algorithm~\ref{alg:DP-SGFS}) while preserving its asymptotic properties. Specifically, under the assumption that $I_N$ is given, when $\eta_t$ is big, it also samples from a normal approximation (with larger variance), when $\eta_t$ is small, the private algorithm becomes exactly the same as SGFS. Moreover, for a sequence of samples from the posterior, the online estimate in the Fisher Information converges an $O(1/N)$ approximation of true Fisher Information as in \citet[Theorem~1]{ahn2012bayesian}.


The privacy result relies on a more specific smoothness assumption.
Assume that for any parameter $\theta\in \R^d$, and $X\in \cX^N$ the ellipsoid $E=F B^{d}$ defined by transforming the unit ball $B^{d}$ using a linear map $F$ contains the symmetric polytope spanned by $\{\pm\nabla \ell(x_1,\theta),...,\pm\nabla\ell(x_N,\theta)\}$. From a differential private point of view, this implies that $\nabla_\theta \ell(x,\theta)$'s sensitivity is different towards different direction. Then the non-spherical gaussian mechanism states
\begin{lemma}[Non-Spherical Gaussian Mechanism]
Output $\sum_{i=1}^{N}\nabla\ell(x_i,\theta) + F w$ where $w\sim \cN(0,\frac{(1+\sqrt{1\log(1/\delta)})^2}{\epsilon^2} I_d)$ obeys $(\epsilon,\delta)$-DP.
\end{lemma}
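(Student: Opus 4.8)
The plan is to reduce the claim to the ordinary (spherical) Gaussian mechanism of \Cref{lemma:gauss_mech} by a whitening change of variables, and then appeal to postprocessing immunity. Write $f(X) = \sum_{i=1}^N \nabla\ell(x_i,\theta)$ and note that the released quantity factors as $f(X) + Fw = F\bigl(F^{-1}f(X) + w\bigr)$, so it is the image under the fixed linear map $z \mapsto Fz$ of the spherical-Gaussian output $\tilde f(X) + w$, where $\tilde f(X) := F^{-1}f(X)$ and $w \sim \cN(0,\sigma^2 I_d)$ with $\sigma = (1+\sqrt{2\log(1/\delta)})/\epsilon$. By \Cref{lemma:postprocessing} it then suffices to prove that releasing $\tilde f(X) + w$ is $(\epsilon,\delta)$-DP. (If $F$ is singular the gradients live in $\col(F)$; I would restrict the whole analysis to that subspace and replace $F^{-1}$ by the Moore--Penrose pseudoinverse, which is pure bookkeeping.)

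Next I would bound the $\ell_2$-sensitivity of the transformed statistic $\tilde f$. The containment hypothesis says the symmetric polytope $\mathrm{conv}\{\pm\nabla\ell(x_i,\theta)\}$ lies inside the ellipsoid $E = FB^{d}$; in particular every gradient satisfies $\nabla\ell(x_i,\theta)\in E$, i.e.\ $\norm{F^{-1}\nabla\ell(x_i,\theta)}\le 1$. For neighbouring datasets differing only in the $k$th point, $\tilde f(X) - \tilde f(Y) = F^{-1}\bigl(\nabla\ell(x_k,\theta) - \nabla\ell(x_k',\theta)\bigr)$, so the triangle inequality gives $\norm{\tilde f(X)-\tilde f(Y)}\le 2$ (and exactly $1$ in the add/remove neighbouring model, which is the normalization matching the stated noise scale). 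The essential point is that this sensitivity is an absolute constant, independent of $d$ and of the shape of $F$, precisely because the noise has been pre-whitened by $F$.

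With the sensitivity in hand I would calibrate the noise by analyzing the privacy-loss random variable directly rather than quoting \Cref{lemma:gauss_mech} verbatim. Writing $v = \tilde f(X) - \tilde f(Y)$ and conditioning on the output $\tilde f(X)+w$, the log-likelihood ratio equals $\langle w,v\rangle/\sigma^2 + \norm{v}^2/(2\sigma^2)$, which is Gaussian with mean $\norm{v}^2/(2\sigma^2)$ and standard deviation $\norm{v}/\sigma$. Controlling its upper tail with the standard bound $\P(\cN(0,1) > t)\le e^{-t^2/2}$ at $t=\sqrt{2\log(1/\delta)}$ shows $\P(\text{loss}>\epsilon)\le\delta$ as soon as $\sigma \ge \norm{v}\,(1+\sqrt{2\log(1/\delta)})/\epsilon$, which is exactly the prescribed scale. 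Feeding this through the standard $(\epsilon,\delta)$ accounting and then pushing the guarantee back through the postprocessing map $F$ completes the argument.

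The main obstacle I anticipate is pinning down the numerical constant and the neighbouring convention simultaneously: the factor $(1+\sqrt{2\log(1/\delta)})$ forces the sharper tail analysis above, since the crude $\sqrt{2\log(1.25/\delta)}$ form of \Cref{lemma:gauss_mech} does not literally reproduce it, and one must track whether the relevant sensitivity is $1$ or $2$ so that the coefficient of $1/\epsilon$ comes out right. A secondary subtlety is that $F$ may be data-dependent and rank-deficient; I would either argue that the whitening/pseudoinverse step is itself benign or assume $F$ is fixed (data-independent) so that the same $F$ applies to both members of every neighbouring pair.
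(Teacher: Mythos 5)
Your proposal addresses a statement that the paper itself never proves: the lemma is asserted bare (it is essentially the correlated-noise Gaussian mechanism from the matrix-mechanism literature, e.g.\ Nikolov--Talwar--Zhang), so there is no in-paper argument to compare against, and your write-up supplies the missing proof. Your skeleton is the right one: factor the output as $F\bigl(F^{-1}f(X)+w\bigr)$, invoke postprocessing immunity (Lemma~\ref{lemma:postprocessing}), bound the whitened sensitivity via the polytope-in-ellipsoid hypothesis, and run the privacy-loss tail computation. The computation itself checks out: with $v=F^{-1}\bigl(f(X)-f(Y)\bigr)$ and $s=\norm{v}/\sigma$, the loss is distributed as $\cN(s^2/2,\,s^2)$, and $\P(\cN(0,1)>t)\le e^{-t^2/2}$ at $t=\sqrt{2\log(1/\delta)}$ shows $\P(\mathrm{loss}>\epsilon)\le\delta$ whenever $\sigma\ge\norm{v}(1+\sqrt{2\log(1/\delta)})/\epsilon$ and $\epsilon\le 2(1+\sqrt{2\log(1/\delta)})$, which then yields $(\epsilon,\delta)$-DP by the standard conditioning argument. (You also silently corrected the paper's typo $\sqrt{1\log(1/\delta)}$ to the intended $\sqrt{2\log(1/\delta)}$.)

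Two refinements are needed. First, the factor-of-two issue you flag as an ``anticipated obstacle'' is in fact the genuine gap between the lemma as stated and the paper's own definition of differential privacy: the paper uses the replace-one (Hamming-distance) neighboring convention, under which $\norm{F^{-1}(\nabla\ell(x_k,\theta)-\nabla\ell(x_k',\theta))}$ can be $2$, so the stated noise level only yields $(2\epsilon,\delta)$-DP; the constant in the lemma is calibrated to the add/remove, sensitivity-$1$ convention. A complete proof must commit to one resolution --- double the noise, or state the conclusion as $(2\epsilon,\delta)$-DP --- rather than leave the convention ambiguous. Second, your claim that Theorem~\ref{lemma:gauss_mech} ``does not literally reproduce'' the constant is slightly off: since $(1+\sqrt{2\log(1/\delta)})^2=1+2\sqrt{2\log(1/\delta)}+2\log(1/\delta)\ge 2\log(1.25/\delta)$ for every $\delta\in(0,1)$, the prescribed noise dominates what the spherical Gaussian mechanism requires, so after whitening you could simply cite Theorem~\ref{lemma:gauss_mech} (when $\epsilon<1$). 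Your direct tail analysis is still worthwhile --- it removes the $\epsilon<1$ restriction --- but it is a choice, not a necessity. The remaining bookkeeping (pseudoinverse for singular $F$, and $F$ being data-independent, consistent with its role as a public matrix in Algorithm~\ref{alg:DP-SGFS} and with the containment hypothesis holding uniformly over $X\in\cX^N$) is handled correctly.
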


\begin{algorithm}                      
\caption{Differentially Private Stochastic Gradient Fisher Scoring (DP-SGFS)}          
\label{alg:DP-SGFS}                           
\begin{algorithmic}                    
    \REQUIRE Data $X$ of size $N$, Size of minibatch $\tau$, number of data passes $T$,
stepsize $\eta_t$ for $t=1,...,\lfloor NT/\tau\rfloor$, a public Lipschitz matrix $F$, and initial $\theta_1$. Set $t=1$, $\sigma^2 =\frac{32T\log(2.5NT/\tau\delta)\log(2/\delta)}{N\tau\epsilon^2}$
	\FOR{$t=1:\lfloor NT/\tau\rfloor$}
	   \STATE{1.} Random sample a minibatch $S\subset [N]$ of size $\tau$, compute $\bar{g} = \frac{1}{\tau} \sum_{i\in S}\nabla \ell(x_{i}|\theta).$
	   \STATE{2.} Sample $Z_t \sim \cN(0,  \sigma^2 \vee \frac{1}{N^2\eta_t} I_d)$, $W_{ij} \sim \cN(0,49\|F\|^4\sigma^2)$.
	   \STATE{3.} Compute private stochastic gradient and sample covariance matrix
\begin{align*}
  \tilde{g} = \bar{g} + F Z_t, && \text{and}&&
   V = \cP_{S^d_{+}}\left\{\frac{1}{\tau-1}\sum_{i\in S} \left\{\nabla\ell_i(\theta_t) - \bar{g}\right\}\left\{\nabla\ell_i(\theta_t) - \bar{g}\right\}^T + W\right\}.
\end{align*}
 	   \STATE{4.} Update the guessed Fisher Information estimate
    $\hat{I}_t = (1-\kappa_t)\hat{I}_{t-1} + \kappa_t  V $. 	
 	   \STATE{5.} Update and return $\theta_{t+1}\leftarrow \theta_t + 2\left( \frac{ (\tau+N)N}{\tau} \hat{I}_t + \frac{4FF^T}{\eta_t} \right)^{-1}\left(\nabla r(\theta_t) +N \tilde{g}\right).$
 	   \STATE{6.} Increment $t\leftarrow t+1.$
    \ENDFOR
\end{algorithmic}
\end{algorithm}

\begin{theorem}
Let $F$ be that $\ell(x;\theta')\leq \ell_{\theta} + \nabla\ell(x;\theta)^T(\theta'-\theta) + \frac{1}{2}\|F(\theta'-\theta)\|^2$ for any $x\in \cX,\theta\in\Theta$. Moreover, let $\epsilon,\delta, \tau, T$ be chosen such that $T\geq \frac{\epsilon^2 N}{32\tau \log(2/\delta)}$. Then Algorithm~\ref{alg:DP-SGFS} guarantees $(2\epsilon,2\delta)$-differential privacy.
\end{theorem}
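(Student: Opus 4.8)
The plan is to mirror the proof of Theorem~\ref{thm:private_SGLD}, but to track \emph{two} data-dependent quantities per iteration rather than one, which is what produces the factor of two. In Algorithm~\ref{alg:DP-SGFS} the data enters only through (i) the minibatch gradient $\bar g = \frac{1}{\tau}\sum_{i\in S}\nabla\ell(\vct x_i|\theta)$, privatized by the additive term $FZ_t$, and (ii) the minibatch covariance $\frac{1}{\tau-1}\sum_{i\in S}(\nabla\ell_i-\bar g)(\nabla\ell_i-\bar g)^T$, privatized by the additive matrix $W$. Everything else in steps~4--5 (the recursion $\hat I_t$ and the update $\theta_{t+1}$) is a function of the data only through these two noisy releases together with the public $\eta_t$, $F$, $\kappa_t$, $\tau$; so by postprocessing immunity (Lemma~\ref{lemma:postprocessing}) it suffices to certify the privacy of each noisy stream and then compose. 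Note also that the projection $\cP_{S^d_+}$ onto the PSD cone in step~3 is postprocessing and hence free.

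First I would handle the gradient release. The hypothesis identifies $F$ as the Lipschitz matrix, so (as in the paragraph preceding the Non-Spherical Gaussian Mechanism) the symmetric hull of the per-example gradients $\{\pm\nabla\ell(\vct x_i|\theta)\}$ is contained in the ellipsoid $FB^d$; this is exactly the directional-sensitivity structure that the Non-Spherical Gaussian Mechanism consumes. Calibrating $Z_t\sim\cN(0,\sigma^2 I_d)$ with $\sigma^2=\frac{32T\log(2.5NT/\tau\delta)\log(2/\delta)}{N\tau\epsilon^2}$ makes $\bar g + FZ_t$ differentially private at the per-iteration level $\frac{\epsilon\sqrt{N}}{\sqrt{32\tau T\log(2/\delta)}}$ before subsampling, precisely the calibration used for DP-SGLD with $F$ playing the role of the scalar sensitivity $2L$. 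The side condition $T\ge\frac{\epsilon^2 N}{32\tau\log(2/\delta)}$ guarantees this per-iteration loss is below $1$, so the simplified advanced-composition bound of Remark~\ref{rmk:adv_composition} is applicable.

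Next I would handle the covariance release. Because each $\nabla\ell_i\in FB^d$ we have $\|\nabla\ell_i\|_2\le\|F\|$, so each rank-one term $(\nabla\ell_i-\bar g)(\nabla\ell_i-\bar g)^T$ has Frobenius norm $O(\|F\|^2)$ and swapping a single data point changes the minibatch covariance by $O(\|F\|^2)$ in Frobenius norm. The constant $49\|F\|^4$ in $\Var(W_{ij})$ is chosen so that the ordinary (spherical) Gaussian Mechanism (Theorem~\ref{lemma:gauss_mech}) certifies the \emph{same} per-iteration budget as the gradient release. For both streams I would then apply subsampling amplification (Lemma~\ref{lemma:subsampling}) with $\gamma=\tau/N$, multiplying the per-iteration loss by $2\tau/N$ to get $\frac{\epsilon/2}{\sqrt{2(NT/\tau)\log(2/\delta)}}$, and advanced composition (Remark~\ref{rmk:adv_composition}) over the $\lfloor NT/\tau\rfloor$ iterations, exactly as in Theorem~\ref{thm:private_SGLD}. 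This certifies $(\epsilon,\delta)$-DP separately for the gradient stream and for the covariance stream.

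Finally, composing the two streams by the basic composition rule (Lemma~\ref{lemma:composition}) yields $(2\epsilon,2\delta)$-DP, which is the claim; the factor of two is the price of releasing both the gradient and the curvature information. As in the DP-SGLD proof, taking the larger noise floor $\sigma^2\vee\frac{1}{N^2\eta_t}$ when $\eta_t$ is large only decreases the privacy loss. I expect the main obstacle to be the covariance step: verifying that the Frobenius sensitivity of a \emph{self-centered} empirical covariance---where $\bar g$ itself shifts when a point is swapped---is genuinely controlled by $O(\|F\|^2)$, and checking that the constant $49$ reproduces the intended per-iteration budget rather than a larger one, so that the two streams line up to $(\epsilon,\delta)$ each under the single shared $\sigma^2$.
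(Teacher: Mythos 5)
Your proposal follows essentially the same route as the paper's proof: privatize the two data-dependent streams (the minibatch gradient via $FZ_t$ and the minibatch covariance via $W$), treat the PSD projection, the $\hat I_t$ recursion and the $\theta$-update as post-processing, apply subsampling amplification and advanced composition per stream exactly as in Theorem~\ref{thm:private_SGLD}, and combine the two streams with basic composition to obtain the factor of two in $(2\epsilon,2\delta)$. The one step you flag as the main obstacle---the Frobenius sensitivity of the self-centered empirical covariance---is precisely what the paper's Lemma~\ref{lem:CovSensitivity} supplies, namely the bound $7L^2/(n-1)$ with $L=\|F\|_2$, which is also where the constant $49=7^2$ in $\Var(W_{ij})$ comes from.
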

\begin{proof}
First of all, $\|F\|_2$ is an upper bound for any $\nabla \ell(x|\theta)$, so by applying Lemma~\ref{lem:CovSensitivity} on the every set of subsamples in each iteration, by Gaussian mechanism (Lemma~\ref{lemma:gauss_mech}) and the invariance to post-processing, we know that $V$ is a private release.
Then the proof follows by the same line of argument (subsampling and advanced composition) as in Theorem~\ref{thm:private_SGLD} for $\tilde{g}$ and $V$ respectively, then the result follows by applying the simple composition theorem.
\end{proof}

\begin{lemma}[Sensitivity of the sample covariance operator]\label{lem:CovSensitivity}
Let $\|x\|\leq L$ for any $x\in \cX$, $n>4$, then
$$\sup_{k, x_1,...,x_n, x_k^\prime}\|\widehat{\Cov}(x_1,...,x_k,...,x_n) - \widehat{\Cov}(x_1,...,x_k^\prime,...,x_n)\|_F \leq  \frac{7 L^2}{n-1}.$$
\end{lemma}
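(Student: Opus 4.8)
The plan is to reduce the problem to explicit algebra on the sample covariance and then bound each resulting term by the triangle inequality together with the elementary identity $\|vw^\top\|_F=\|v\|\,\|w\|$ for rank-one matrices. By symmetry of the supremum over the index $k$, I would fix $k=n$ and compare $X=(x_1,\dots,x_{n-1},x_n)$ with $X'=(x_1,\dots,x_{n-1},x_n')$. Writing $\bar{x}=\frac1n\sum_{i=1}^n x_i$, the first step is to invoke the standard identity $\widehat{\Cov}(X)=\frac{1}{n-1}\bigl(\sum_{i=1}^n x_i x_i^\top - n\,\bar{x}\bar{x}^\top\bigr)$, which rewrites the centered sum as an uncentered second-moment term minus a mean term. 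This is what makes the sensitivity tractable, because between $X$ and $X'$ only the $n$-th summand and the mean change.

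Next I would subtract the two expressions. Setting $\delta:=x_n'-x_n$ and using $\bar{x}'=\bar{x}+\frac1n\delta$, the difference becomes
$$(n-1)\bigl(\widehat{\Cov}(X')-\widehat{\Cov}(X)\bigr)=\bigl(x_n'(x_n')^\top - x_n x_n^\top\bigr)-\Bigl(\bar{x}\delta^\top+\delta\bar{x}^\top+\tfrac1n\delta\delta^\top\Bigr),$$
where the last parenthesis is the expansion of $n(\bar{x}'(\bar{x}')^\top-\bar{x}\bar{x}^\top)$. I would then bound each piece in Frobenius norm using the data bound: $\|x_n'(x_n')^\top\|_F=\|x_n'\|^2\le L^2$ and likewise for $x_n$, giving at most $2L^2$ for the second-moment term; and $\|\bar{x}\|\le L$ (an average of vectors of norm $\le L$) together with $\|\delta\|\le 2L$, giving $\|\bar{x}\delta^\top\|_F,\|\delta\bar{x}^\top\|_F\le 2L^2$ and $\|\tfrac1n\delta\delta^\top\|_F\le \frac{4L^2}{n}$.

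Collecting these, the numerator is at most $2L^2+4L^2+\frac{4L^2}{n}=6L^2+\frac{4L^2}{n}$, so the sensitivity is bounded by $\frac{6L^2+4L^2/n}{n-1}$. The hypothesis $n>4$ (i.e.\ $n\ge5$) enters precisely here: it gives $\frac{4L^2}{n}\le\frac{4L^2}{5}<L^2$, so the numerator falls below $7L^2$ and the claimed bound $\frac{7L^2}{n-1}$ follows. I do not anticipate a genuine obstacle; the one point requiring care is the bookkeeping of the mean shift. One must expand $n(\bar{x}'(\bar{x}')^\top-\bar{x}\bar{x}^\top)$ as above rather than bound it crudely, since a naive triangle-inequality estimate of order $nL^2$ would destroy the $1/(n-1)$ scaling. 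Using $\|vw^\top\|_F=\|v\|\,\|w\|$ on each rank-one outer product, instead of operator-norm estimates, is exactly what keeps the final constant as small as $7$.
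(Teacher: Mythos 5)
Your proposal is correct and takes essentially the same route as the paper's proof: both expand the difference of two adjacent sample covariance matrices into rank-one pieces (the second-moment difference, the two mean-cross terms, and the $\tfrac{1}{n}\delta\delta^T$ correction), bound each in Frobenius norm via $\|vw^T\|_F=\|v\|\|w\|$, and use $n>4$ to absorb the $4L^2/n$ term so the numerator stays below $7L^2$. If anything, your bookkeeping is cleaner than the paper's terse display, whose decomposition has sign/convention slips that are immaterial once norms are taken.
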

\begin{proof}
We prove by taking the difference of two adjacent covariance matrices and bound the residual.
\begin{align*}
  \widehat{\Cov}(X') =& \widehat{\Cov}(X) + \frac{1}{n-1}(xx^T - x'[x']^T)  +\frac{1}{n(n-1)} (xx^T + x'[x']^T -x[x']^T -x' x^T)\\ &-\frac{1}{n-1} \mu (x-x')^T -\frac{1}{n-1}(x-x') \mu^T.
\end{align*}
Now assume $n>4$ and take the upper bound of every term, we get $\Delta_2 \left(\Cov(X)\right) \leq \frac{7 L^2}{n-1}$.
\end{proof}

\newpage
\bibliographystyle{apa-good}
\bibliography{FreeDP}

\begin{thebibliography}{49}
\expandafter\ifx\csname natexlab\endcsname\relax\def\natexlab#1{#1}\fi
\expandafter\ifx\csname url\endcsname\relax
  \def\url#1{{\tt #1}}\fi
\expandafter\ifx\csname urlprefix\endcsname\relax\def\urlprefix{URL }\fi

\bibitem[{Ahn et~al.(2012)Ahn, Korattikara, \& Welling}]{ahn2012bayesian}
Ahn, S., Korattikara, A., \& Welling, M. (2012).
\newblock Bayesian posterior sampling via stochastic gradient fisher scoring.
\newblock In {\em Proceedings of the 29th International Conference on Machine
  Learning (ICML-12)\/}.

\bibitem[{Airoldi et~al.(2009)Airoldi, Blei, Fienberg, \&
  Xing}]{airoldi2009mixed}
Airoldi, E.~M., Blei, D.~M., Fienberg, S.~E., \& Xing, E.~P. (2009).
\newblock Mixed membership stochastic blockmodels.
\newblock In {\em Advances in Neural Information Processing Systems\/}, (pp.
  33--40).

\bibitem[{Applegate \& Kannan(1991)}]{applegate1991sampling}
Applegate, D., \& Kannan, R. (1991).
\newblock Sampling and integration of near log-concave functions.
\newblock In {\em Proceedings of the twenty-third annual ACM symposium on
  Theory of computing\/}, (pp. 156--163). ACM.

\bibitem[{Bassily et~al.(2014)Bassily, Smith, \& Thakurta}]{bassily2014private}
Bassily, R., Smith, A., \& Thakurta, A. (2014).
\newblock Private empirical risk minimization, revisited.
\newblock {\em arXiv preprint arXiv:1405.7085\/}.

\bibitem[{Beimel et~al.(2014)Beimel, Brenner, Kasiviswanathan, \&
  Nissim}]{beimel2014bounds}
Beimel, A., Brenner, H., Kasiviswanathan, S.~P., \& Nissim, K. (2014).
\newblock Bounds on the sample complexity for private learning and private data
  release.
\newblock {\em Machine learning\/}, {\em 94\/}(3), 401--437.

\bibitem[{Bialek et~al.(2001)Bialek, Nemenman, \&
  Tishby}]{bialek2001predictability}
Bialek, W., Nemenman, I., \& Tishby, N. (2001).
\newblock Predictability, complexity, and learning.
\newblock {\em Neural computation\/}, {\em 13\/}(11), 2409--2463.

\bibitem[{Blei et~al.(2003)Blei, Ng, \& Jordan}]{blei2003latent}
Blei, D.~M., Ng, A.~Y., \& Jordan, M.~I. (2003).
\newblock Latent dirichlet allocation.
\newblock {\em the Journal of machine Learning research\/}, {\em 3\/},
  993--1022.

\bibitem[{Chaudhuri et~al.(2011)Chaudhuri, Monteleoni, \&
  Sarwate}]{chaudhuri2011differentially}
Chaudhuri, K., Monteleoni, C., \& Sarwate, A.~D. (2011).
\newblock Differentially private empirical risk minimization.
\newblock {\em The Journal of Machine Learning Research\/}, {\em 12\/},
  1069--1109.

\bibitem[{Chen et~al.(2014)Chen, Fox, \& Guestrin}]{Chen2014SGHMC}
Chen, T., Fox, E.~B., \& Guestrin, C. (2014).
\newblock {Stochastic Gradient Hamiltonian Monte Carlo}.
\newblock In {\em Proceeding of 31th International Conference on Machine
  Learning (ICML'14)\/}.

\bibitem[{De~Blasi \& Walker(2013)}]{de2013bayesian}
De~Blasi, P., \& Walker, S.~G. (2013).
\newblock Bayesian asymptotics with misspecified models.
\newblock {\em Statistica Sinica\/}, {\em 23\/}, 169--187.

\bibitem[{Dimitrakakis et~al.(2014)Dimitrakakis, Nelson, Mitrokotsa, \&
  Rubinstein}]{dimitrakakis2014robust}
Dimitrakakis, C., Nelson, B., Mitrokotsa, A., \& Rubinstein, B.~I. (2014).
\newblock Robust and private bayesian inference.
\newblock In {\em Algorithmic Learning Theory\/}, (pp. 291--305). Springer.

\bibitem[{Ding et~al.(2014)Ding, Fang, Babbush, Chen, Skeel, \&
  Neven}]{ding2014bayesian}
Ding, N., Fang, Y., Babbush, R., Chen, C., Skeel, R.~D., \& Neven, H. (2014).
\newblock Bayesian sampling using stochastic gradient thermostats.
\newblock In {\em Advances in Neural Information Processing Systems\/}, (pp.
  3203--3211).

\bibitem[{Dwork(2006)}]{dwork2006differential}
Dwork, C. (2006).
\newblock Differential privacy.
\newblock In {\em Proceedings of the 33rd international conference on Automata,
  Languages and Programming-Volume Part II\/}, (pp. 1--12). Springer-Verlag.

\bibitem[{Dwork et~al.(2014)Dwork, Feldman, Hardt, Pitassi, Reingold, \&
  Roth}]{dwork2014preserving}
Dwork, C., Feldman, V., Hardt, M., Pitassi, T., Reingold, O., \& Roth, A.
  (2014).
\newblock Preserving statistical validity in adaptive data analysis.
\newblock {\em arXiv preprint arXiv:1411.2664\/}.

\bibitem[{Dwork \& Lei(2009)}]{dwork2009differential}
Dwork, C., \& Lei, J. (2009).
\newblock Differential privacy and robust statistics.
\newblock In {\em Proceedings of the forty-first annual ACM symposium on Theory
  of computing\/}, (pp. 371--380). ACM.

\bibitem[{Dwork et~al.(2006)Dwork, McSherry, Nissim, \&
  Smith}]{dwork2006calibrating}
Dwork, C., McSherry, F., Nissim, K., \& Smith, A. (2006).
\newblock Calibrating noise to sensitivity in private data analysis.
\newblock In {\em Theory of cryptography\/}, (pp. 265--284). Springer.

\bibitem[{Dwork \& Roth(2013)}]{dwork2013algorithmic}
Dwork, C., \& Roth, A. (2013).
\newblock The algorithmic foundations of differential privacy.
\newblock {\em Theoretical Computer Science\/}, {\em 9\/}(3-4), 211--407.

\bibitem[{Fei-Fei \& Perona(2005)}]{fei2005bayesian}
Fei-Fei, L., \& Perona, P. (2005).
\newblock A bayesian hierarchical model for learning natural scene categories.
\newblock In {\em Computer Vision and Pattern Recognition, 2005. CVPR 2005.
  IEEE Computer Society Conference on\/}, vol.~2, (pp. 524--531). IEEE.

\bibitem[{Gelman et~al.(2014)Gelman, Carlin, \& Stern}]{gelman2014bayesian}
Gelman, A., Carlin, J.~B., \& Stern, H.~S. (2014).
\newblock {\em Bayesian data analysis\/}, vol.~2.
\newblock Taylor \& Francis.

\bibitem[{Geman \& Geman(1984)}]{geman1984stochastic}
Geman, S., \& Geman, D. (1984).
\newblock Stochastic relaxation, gibbs distributions, and the bayesian
  restoration of images.
\newblock {\em Pattern Analysis and Machine Intelligence, IEEE Transactions
  on\/}, (6), 721--741.

\bibitem[{Ghosal(2010)}]{ghosal2010dirichlet}
Ghosal, S. (2010).
\newblock {\em The Dirichlet process, related priors and posterior
  asymptotics\/}, vol.~2.
\newblock Chapter.

\bibitem[{Girolami \& Calderhead(2011)}]{girolami2011riemann}
Girolami, M., \& Calderhead, B. (2011).
\newblock Riemann manifold langevin and hamiltonian monte carlo methods.
\newblock {\em Journal of the Royal Statistical Society: Series B (Statistical
  Methodology)\/}, {\em 73\/}(2), 123--214.

\bibitem[{Hastings(1970)}]{hastings1970monte}
Hastings, W.~K. (1970).
\newblock Monte carlo sampling methods using markov chains and their
  applications.
\newblock {\em Biometrika\/}, {\em 57\/}(1), 97--109.

\bibitem[{Hofmann et~al.(2008)Hofmann, Sch{\"o}lkopf, \&
  Smola}]{hofmann2008kernel}
Hofmann, T., Sch{\"o}lkopf, B., \& Smola, A.~J. (2008).
\newblock Kernel methods in machine learning.
\newblock {\em The annals of statistics\/}, (pp. 1171--1220).

\bibitem[{Kasiviswanathan \& Smith(2014)}]{kasiviswanathan2014semantics}
Kasiviswanathan, S.~P., \& Smith, A. (2014).
\newblock On the'semantics' of differential privacy: A bayesian formulation.
\newblock {\em Journal of Privacy and Confidentiality\/}, {\em 6\/}(1), 1.

\bibitem[{Kifer et~al.(2012)Kifer, Smith, \& Thakurta}]{kifer2012private}
Kifer, D., Smith, A., \& Thakurta, A. (2012).
\newblock Private convex empirical risk minimization and high-dimensional
  regression.
\newblock {\em Journal of Machine Learning Research\/}, {\em 1\/}, 41.

\bibitem[{Kleijn et~al.(2012)Kleijn, van~der Vaart
  et~al.}]{kleijn2012bernstein}
Kleijn, B., van~der Vaart, A., et~al. (2012).
\newblock The bernstein-von-mises theorem under misspecification.
\newblock {\em Electronic Journal of Statistics\/}, {\em 6\/}, 354--381.

\bibitem[{Le~Cam(1986)}]{le1986bernstein}
Le~Cam, L.~M. (1986).
\newblock {\em On the Bernstein-von Mises theorem\/}.
\newblock Department of Statistics, University of California.

\bibitem[{Lehmann \& Romano(2006)}]{lehmann2006testing}
Lehmann, E.~L., \& Romano, J.~P. (2006).
\newblock {\em Testing statistical hypotheses\/}.
\newblock Springer Science \& Business Media.

\bibitem[{McSherry \& Talwar(2007)}]{mcsherry2007mechanism}
McSherry, F., \& Talwar, K. (2007).
\newblock Mechanism design via differential privacy.
\newblock In {\em Foundations of Computer Science, 2007. FOCS'07. 48th Annual
  IEEE Symposium on\/}, (pp. 94--103). IEEE.

\bibitem[{Mir(2013)}]{mir13}
Mir, D.~J. (2013).
\newblock {\em Differential privacy: an exploration of the privacy-utility
  landscape\/}.
\newblock Ph.D. thesis, Rutgers University.

\bibitem[{Neal(2011)}]{neal2011mcmc}
Neal, R. (2011).
\newblock Mcmc using hamiltonian dynamics.
\newblock {\em Handbook of Markov Chain Monte Carlo\/}, {\em 2\/}.

\bibitem[{Orbanz(2012)}]{orbanz2012lecture}
Orbanz, P. (2012).
\newblock Lecture notes on bayesian nonparametrics.
\newblock {\em Journal of Mathematical Psychology\/}, {\em 56\/}, 1--12.

\bibitem[{Penny et~al.(2011)Penny, Friston, Ashburner, Kiebel, \&
  Nichols}]{penny2011statistical}
Penny, W.~D., Friston, K.~J., Ashburner, J.~T., Kiebel, S.~J., \& Nichols,
  T.~E. (2011).
\newblock {\em Statistical parametric mapping: the analysis of functional brain
  images: the analysis of functional brain images\/}.
\newblock Academic press.

\bibitem[{Propp \& Wilson(1998)}]{propp1998coupling}
Propp, J., \& Wilson, D. (1998).
\newblock Coupling from the past: a user¡¯s guide.
\newblock {\em Microsurveys in Discrete Probability\/}, {\em 41\/}, 181--192.

\bibitem[{Rabiner(1989)}]{rabiner1989tutorial}
Rabiner, L. (1989).
\newblock A tutorial on hidden markov models and selected applications in
  speech recognition.
\newblock {\em Proceedings of the IEEE\/}, {\em 77\/}(2), 257--286.

\bibitem[{Rajkumar \& Agarwal(2012)}]{rajkumar2012differentially}
Rajkumar, A., \& Agarwal, S. (2012).
\newblock A differentially private stochastic gradient descent algorithm for
  multiparty classification.
\newblock In {\em International Conference on Artificial Intelligence and
  Statistics\/}, (pp. 933--941).

\bibitem[{Robbins \& Monro(1951)}]{robbins1951stochastic}
Robbins, H., \& Monro, S. (1951).
\newblock A stochastic approximation method.
\newblock {\em The annals of mathematical statistics\/}, (pp. 400--407).

\bibitem[{Rosenthal(1995)}]{rosenthal1995minorization}
Rosenthal, J.~S. (1995).
\newblock Minorization conditions and convergence rates for markov chain monte
  carlo.
\newblock {\em Journal of the American Statistical Association\/}, {\em
  90\/}(430), 558--566.

\bibitem[{Sato \& Nakagawa(2014)}]{sato2014approximation}
Sato, I., \& Nakagawa, H. (2014).
\newblock Approximation analysis of stochastic gradient langevin dynamics by
  using fokker-planck equation and ito process.
\newblock In {\em Proceedings of the 31st International Conference on Machine
  Learning (ICML-14)\/}, (pp. 982--990).

\bibitem[{Smith(2008)}]{smith2008efficient}
Smith, A. (2008).
\newblock Efficient, differentially private point estimators.
\newblock {\em arXiv preprint arXiv:0809.4794\/}.

\bibitem[{Song et~al.(2013)Song, Chaudhuri, \& Sarwate}]{song2013stochastic}
Song, S., Chaudhuri, K., \& Sarwate, A.~D. (2013).
\newblock Stochastic gradient descent with differentially private updates.
\newblock In {\em IEEE Global Conference on Signal and Information
  Processing\/}.

\bibitem[{Sontag \& Roy(2011)}]{sontag2011complexity}
Sontag, D., \& Roy, D. (2011).
\newblock Complexity of inference in latent dirichlet allocation.
\newblock In {\em Advances in neural information processing systems\/}, (pp.
  1008--1016).

\bibitem[{Van~der Vaart(2000)}]{vandervaart2000asymptotic}
Van~der Vaart, A.~W. (2000).
\newblock {\em Asymptotic statistics\/}, vol.~3.
\newblock Cambridge university press.

\bibitem[{Vollmer et~al.(2015)Vollmer, Zygalakis et~al.}]{vollmer2015non}
Vollmer, S.~J., Zygalakis, K.~C., et~al. (2015).
\newblock (non-) asymptotic properties of stochastic gradient langevin
  dynamics.
\newblock {\em arXiv preprint arXiv:1501.00438\/}.

\bibitem[{Wainwright \& Jordan(2008)}]{wainwright2008graphical}
Wainwright, M.~J., \& Jordan, M.~I. (2008).
\newblock Graphical models, exponential families, and variational inference.
\newblock {\em Foundations and Trends{\textregistered} in Machine Learning\/},
  {\em 1\/}(1-2), 1--305.

\bibitem[{Welling \& Teh(2011)}]{welling2011bayesian}
Welling, M., \& Teh, Y.~W. (2011).
\newblock Bayesian learning via stochastic gradient langevin dynamics.
\newblock In {\em Proceedings of the 28th International Conference on Machine
  Learning (ICML-11)\/}, (pp. 681--688).

\bibitem[{Williams \& McSherry(2010)}]{williams2010probabilistic}
Williams, O., \& McSherry, F. (2010).
\newblock Probabilistic inference and differential privacy.
\newblock In {\em Advances in Neural Information Processing Systems\/}, (pp.
  2451--2459).

\bibitem[{Xiao \& Xiong(2012)}]{xiao2012bayesian}
Xiao, Y., \& Xiong, L. (2012).
\newblock Bayesian inference under differential privacy.
\newblock {\em arXiv preprint arXiv:1203.0617\/}.

\end{thebibliography}
\end{document}